\documentclass[sigconf]{acmart}
\usepackage{svg}
\usepackage{float}                  
\usepackage{subfig}                 
\usepackage{algorithm}
\usepackage{algpseudocode}
\usepackage{cleveref}
\usepackage{graphicx}


\newtheorem{theorem}{Theorem}

\newtheorem{corollary}{Corollary}

\newtheorem{assumption}{Assumption}

\usepackage{multirow}
\usepackage{multicol}
\usepackage{enumitem}
\definecolor{pink}{rgb}{0.858, 0.188, 0.478}
\definecolor{commentcolor}{RGB}{110,154,155}   
\AtBeginDocument{%
  \providecommand\BibTeX{{%
    \normalfont B\kern-0.5em{\scshape i\kern-0.25em b}\kern-0.8em\TeX}}}

\setcopyright{acmcopyright}
\copyrightyear{2024}
\acmYear{2024}
\acmDOI{XXXXXXX.XXXXXXX}

\acmConference[WSDM '24]{the 17th ACM International Conference on Web Search and Data Mining}{March 04--08, 2024}{Mérida, Mexico}
%
%
\acmPrice{15.00}
\acmISBN{978-1-4503-XXXX-X/18/06}




\begin{document}

\title{Rethinking and Simplifying Bootstrapped Graph Latents}

\author{Wangbin Sun}
\affiliation{%
  \institution{Sun Yat-sen University}
 \country{}
  }
\email{sunwb7@mail2.sysu.edu.cn}

\author{Jintang Li}
\affiliation{%
  \institution{Sun Yat-sen University}
 \country{}
  }
\email{lijt55@mail2.sysu.edu.cn}

\author{Liang Chen}
\authornote{Corresponding author.}
\affiliation{%
  \institution{Sun Yat-sen University}
 \country{}
}
\email{chenliang6@mail.sysu.edu.cn}

\author{Bingzhe Wu}
\affiliation{%
  \institution{Peking University}
 \country{}
}
\email{wubingzhe@pku.edu.cn}

\author{Yatao Bian}
\affiliation{%
  \institution{Tencent AI Lab}
 \country{}
}
\email{ yatao.bian@gmail.com}

\author{Zibin Zheng}
\affiliation{%
 \institution{Sun Yat-sen University}
 \country{}
 }
 \email{zhzibin@mail.sysu.edu.cn}


\begin{abstract}
Graph contrastive learning (GCL) has emerged as a representative paradigm in graph self-supervised learning, where negative samples are commonly regarded as the key to preventing model collapse and producing distinguishable representations. 
Recent studies have shown that GCL without negative samples can achieve state-of-the-art performance as well as scalability improvement, with bootstrapped graph latent (BGRL) as a prominent step forward. 
However, BGRL relies on a complex architecture to maintain the ability to scatter representations, and the underlying mechanisms enabling the success remain largely unexplored. 
In this paper, we introduce an instance-level decorrelation perspective to tackle the aforementioned issue and leverage it as a springboard to reveal the potential unnecessary model complexity within BGRL. 
Based on our findings, we present SGCL, a simple yet effective GCL framework that utilizes the outputs from two consecutive iterations as positive pairs, eliminating the negative samples. 
SGCL only requires a single graph augmentation and a single graph encoder without additional parameters. 
Extensive experiments conducted on various graph benchmarks demonstrate that SGCL can achieve competitive performance with fewer parameters, lower time and space costs, and significant convergence speedup.\footnote{Code is made publicly available at \url{https://github.com/ZsZsZs25/SGCL}.}
\end{abstract}

\begin{CCSXML}
<ccs2012>
   <concept>
       <concept_id>10002951.10003227.10003351</concept_id>
       <concept_desc>Information systems~Data mining</concept_desc>
       <concept_significance>500</concept_significance>
       </concept>
   <concept>
       <concept_id>10010147.10010257.10010258.10010260</concept_id>
       <concept_desc>Computing methodologies~Unsupervised learning</concept_desc>
       <concept_significance>500</concept_significance>
       </concept>
 </ccs2012>
\end{CCSXML}

\ccsdesc[500]{Information systems~Data mining}
\ccsdesc[500]{Computing methodologies~Unsupervised learning}

\keywords{Graph Neural Networks; 
    Graph Representation Learning; 
    Graph Self-supervised Learning
}


\maketitle

\section{Introduction}

\begin{figure}[h]
    \centering
    \includegraphics[width=1\linewidth]{./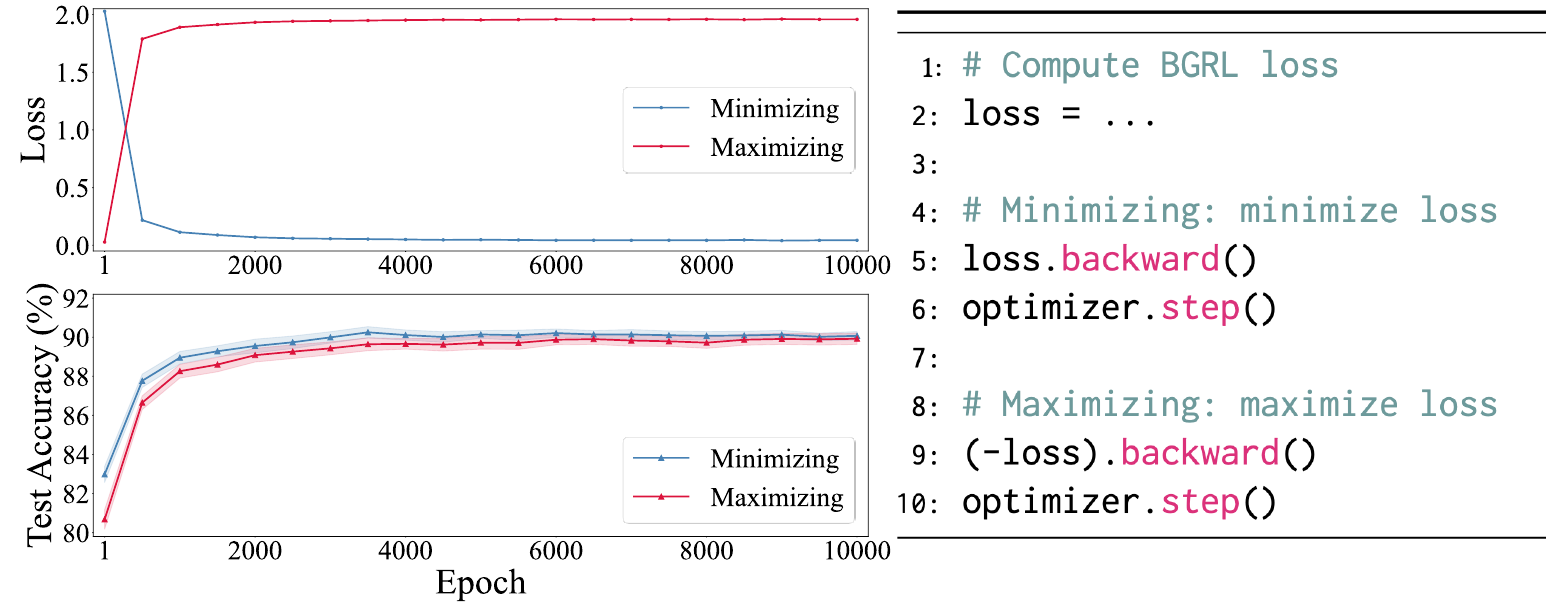}
    \caption{Training loss and test accuracy curves of maximizing and minimizing loss on the  Amazon-Computers dataset.}
    \label{img:neg_loss_comp}
    
\end{figure}

Graph self-supervised learning (GSSL), which learns meaningful representations through purpose-designed pretext tasks, has gained prominence as a potent approach to mitigate the pervasive issue of artificial label dependency~\cite{GSSL_P_C_G}. Drawing inspiration from contrastive learning in vision research~\cite{MoCo, SimCLR, BYOL, Siamese}, graph contrastive learning (GCL) has emerged as a prevailing paradigm of GSSL and exhibited remarkable success across various downstream tasks, including citation classification~\cite{DGI, MVGRL, GRACE, BGRL,CCA-SSG}, recommendation systems~\cite{SGL, SSL_Hyperrec}, and molecular property prediction~\cite{MoICLR, Motif}.

Typically, GCL endeavors to congregate positive pairs to be invariant to noise (alignment) and achieve a roughly uniform distribution of representations through negative pairs (uniformity), where the uniformity property serves as a critical factor in preventing model collapse and generating discriminative representations~\cite{algin_uniform}. As such, current GCL methods inherently rely on increasing the number and quality of negative samples, leading to heavy computation and memory overhead, especially for large graphs~\cite{BGRL}. To overcome these issues, researchers have explored the possibility of learning without negative samples, and recent advances have demonstrated its existence~\cite{BGRL, CCA-SSG, AFGRL}. As pioneers on this path, bootstrapped graph latents (BGRL) and its variants~\cite{BGRL,AFGRL} have evolved into state-of-the-art on various downstream benchmarks and have also shown good scalability.

Basically, BGRL employs additional network modules to scatter output representations and alleviate model collapses, such as distinct graph augmentation functions, predictor networks, and asymmetric networks. It then learns node representations by predicting alternative augmentations of the input graph and maximizing the similarity between the prediction and paired target.  
However, the underlying nature of the success of such a complex architecture is not yet fully explored. 
For instance, in the absence of negative samples, we are supposed to maximize the similarity between positive pairs, corresponding to minimizing the loss function during the training process. 
Surprisingly, we find that BGRL still works well even when minimizing the similarity of positive samples, which is equivalent to maximizing the loss function. As shown in Figure~\ref{img:neg_loss_comp}, the trend of test accuracy is almost the same for minimizing and maximizing the loss function during the training process on the Amazon-Computers dataset, and results on other datasets that are omitted for space show exactly a similar trend. 
This phenomenon greatly challenges our traditional understanding, and a neglected issue raises in our minds: \emph{what in the complex architecture truly contributes to the success of BGRL?}

To answer this question, we empirically and theoretically analyze the nature of BGRL's success. Empirically, we investigate the role of components in BGRL and find that the existence of graph augmentations and predictor is fundamental to BGRL, regardless of whether distinct augmentation functions and asymmetric networks are applied or whether the similarity of positive pairs is maximized. 
Theoretically, we reveal that the predictor implicitly assists BGRL in an instance-level decorrelation way, which is the cornerstone for BGRL to generate discriminative representations and prevent model collapses. 
Nevertheless, achieving the goal of decorrelation through optimizing parameterized predictor may result in slower model convergence and subsequently impact model performance, especially in large-scale graphs. 
To address this issue, we estimate the predictor from the output of the graph encoder without additional learning parameters.
The above findings suggest the substantial redundancy in BGRL. Therefore, in this paper, we are motivated to design a simple yet effective GCL framework named SGCL, which only requires a single graph augmentation function, a single graph encoder and a non-parametric predictor. In particular, we adopt a pipeline-style training paradigm, where we only perform one augmentation operation each iteration and take the outputs of two consecutive iterations as positive samples.
As shown in Table~\ref{tab:tech comparison}, the proposed lightweight SGCL does not rely on negative pairs, an additional discriminator, projector, or predictor while only requiring one augmentation operation at each iteration. To summarize, this work makes the following main contributions: 
\begin{itemize}
    \item
          We present a counterintuitive observation of the classical negative-sample-free GCL framework BGRL, i.e., making positive pairs dissimilar still works well, which could motivate future research to explore why GCL works.
    \item
          We provide both experimental and theoretical analysis of BGRL, uncovering the hidden factors for its success and the redundancy in its architecture.
    \item
          We propose SGCL, a simple and effective negative-sample-free GCL method, that maximizes the similarity of positive pairs from consecutive iterations using only one graph augmentation, one graph encoder, and one inferential predictor.
    \item
          Extensive experiments demonstrate that SGCL could achieve competitive performance compared to BGRL and state-of-the-arts with fewer parameters, less memory, and faster running and convergence speed.          
\end{itemize}

\begin{table}[t]
    \caption{Technical comparison with previous methods. \emph{Neg samples}: require negative samples or not. \emph{Proj/Pred/Disc}: require projector/predictor/discriminator or not. \emph{\#Encoder}: number of graph encoders. \emph{\#Aug View}: number of augmented graph views at each iteration.}
    \label{tab:tech comparison}
    \scalebox{0.8}{
        \begin{tabular}{lcccc}
            \toprule
            \textbf{Methods}   & Neg samples & Proj/Pred/Disc & \#Encoder & \#Aug View \\
            \midrule
            \textbf{DGI}\cite{DGI}       & \checkmark  & \checkmark     & 1         & 1          \\
            \textbf{MVGRL}\cite{MVGRL}     & \checkmark  & \checkmark     & 2         & 2          \\
            \textbf{GRACE}\cite{GRACE}     & \checkmark  & \checkmark     & 1         & 2          \\
            \textbf{GCA}\cite{GCA}       & \checkmark  & \checkmark     & 1         & 2          \\
            \textbf{COSTA}\cite{COSTA}     & \checkmark  & \checkmark     & 1-2       & 1-2        \\
            \textbf{BGRL}\cite{BGRL}      & -           & \checkmark     & 2         & 2          \\
            \textbf{AFGRL}\cite{AFGRL}     & -           & \checkmark     & 2         & 0          \\
            \textbf{CCA-SSG}\cite{CCA-SSG}   & -           & -              & 1         & 2          \\
            \textbf{SGCL} (ours) & -           & -              & 1         & 1          \\
            \bottomrule
        \end{tabular}}
\end{table}
\section{Related Work}

Our work is conceptually related to graph neural networks, graph contrastive learning, and recent advancements in contrastive learning without using negative examples. We proceed by reviewing major threads of relevant research efforts.\\
\textbf{Graph neural networks.}
Graph neural networks (GNNs) are a class of neural networks that are widely adopted as encoders for representing graph data. They generally follow the canonical \emph{message passing} scheme that each node’s representation is computed recursively by aggregating representations (``messages'') from its immediate neighbors~\cite{GCN, GraphSAGE}. So far, extensive studies have been conducted on GNNs for a variety of graph analysis tasks and achieved significant improvements over traditional methods on benchmarks. Recently, there has been significant interest in simplifying the GNN architectures by dropping non-linear activation~\cite{SGC,LightGCN} and knowledge distillation~\cite{DBLP:conf/iclr/ZhangLSS22}, which paves a clearer path towards improving the scalability of GNNs on large-scale graphs.\\
\textbf{Graph contrastive learning.}
Contrastive learning on graphs is an important technique to make use of rich unlabeled data. Such a paradigm typically learns representations from self-defined supervisions by contrasting positive
and negative samples from different augmentation views of inputs. As a pioneer work, DGI~\cite{DGI} proposes to learn node representations through contrasting local and global embeddings. 
GRACE \cite{GRACE} and GCA \cite{GCA} learn node representations by pulling the representation of the same node in two augmented views closer while pushing the representations of the other nodes in two views further. 
Despite the success of contrastive learning on graphs, they require a large number of negative samples with carefully crafted encoders and augmentation techniques to learn discriminative representations, making them suffer seriously from high computation and memory overheads during training~\cite{AFGRL,CCA-SSG}.\\
\textbf{Graph contrastive learning without negative samples.}
Recently, literature has shown that negative samples are not always necessary for graph contrastive learning~\cite{BGRL,CCA-SSG,AFGRL}. Typically, contrastive learning with no negative samples relies on mechanisms like stop-gradient~\cite{BGRL,AFGRL}, additional predictor \cite{BGRL,AFGRL}, and feature decorrelation loss function~\cite{CCA-SSG} to avoid representation collapse. Among contemporary approaches, BGRL is a promising recent alternative to negative-sample-free GCL algorithms, leading to new state-of-the-art performance on broad downstream tasks. Yet, it requires complex asymmetric architectures to refine node representations, which can seriously compromise the scalability of the method. In this work, we dig into the hidden reasoning behind the key success of BGRL and seek to simplify its model design with a theoretical analysis of the model effectiveness.

\section{Problem Statement and Preliminary}

\subsection{Problem Statement}
Consider a graph $\mathcal{G}=(\mathcal{V}, \mathcal{E})$, where $\mathcal{V}=\{v_1, v_2, \dots, v_N\}$ and $\mathcal{E} \subseteq \mathcal{V} \times \mathcal{V}$ denote the node set and edge set respectively, $N=|\mathcal{V}|$ is the number of nodes.  $\mathbf{A} \in \mathbb{R}^{N \times N}$ and $\mathbf{X} \in \mathbb{R}^{N \times F}$ are the associated input adjacency matrix and feature matrix with $\mathcal{G}$. We are committed to learning a graph encoder $f_{\theta}(\cdot)$ to obtain the low-dimensional node embeddings $\mathbf{H}=f_{\theta}(\mathbf{A}, \mathbf{X}) \in \mathbb{R}^{N \times d}$ without accessing any label information, where $d$ is the embedding size.

\subsection{Bootstrapped Graph Latents}
We first introduce the Bootstrapped Graph Latents (BGRL) ~\cite{BGRL}, which aims to maximize the similarity between the representations of the same node produced from two distinct augmented graph views in virtue of the following three major components. \\
\textbf{Graph augmentation.} 
Given the adjacency matrix $\mathbf{A}$ and feature matrix $\mathbf{X}$ of a graph $\mathcal{G}$, BGRL utilizes two stochastic graph augmentation functions $\mathcal{T}_1$ and $\mathcal {T}_2$ to produce two alternate graph views $\mathcal{G}_1 \sim (\widetilde{\mathbf{A}}_1, \widetilde{\mathbf{X}}_1)$ and $\mathcal{G}_2 \sim ( \widetilde{\mathbf{A}}_2, \widetilde{\mathbf{X}}_2)$ at each training iteration. Specifically, the augmentation functions $\mathcal{T}_1$ and $\mathcal{T}_2$ are simple combinations of random node feature masking and edge masking~\cite{GRACE} with favorable masking probabilities. \\
\textbf{Node embedding generation.} 
Varying from the classical GCL frameworks with a shared graph encoder, BGRL adopts two separate graph encoders, i.e., the online encoder $f_{\theta}$ and target encoder $f_{\phi}$. The two augmented graph views $\mathcal{G}_1$ and $\mathcal{G}_2$ are fed into the online encoder and target encoder respectively to produce online representations $\widetilde{\mathbf{H}}_1=f_{\theta}(\widetilde{\mathbf{A}}_1, \widetilde{\mathbf{X}}_1)$ and target representations $\widetilde{\mathbf{H}}_2=f_{\phi}(\widetilde{\mathbf{A}}_2, \widetilde{\mathbf{X}}_2)$.
Moreover, BGRL applies an additional node-level predictor (default as a MLP) $p_{\theta}$ to transform the online representations to a prediction $\widetilde{\mathbf{Z}}_{1}=p_{\theta}(\widetilde{{\mathbf{H}}}_1)$ of the target representations $\widetilde{\mathbf{H}}_2$. \\
\textbf{Similarity maximization.} 
Since BGRL is negative-sample-free, it learns by maximizing the cosine similarity between the prediction of target representations $\widetilde{\mathbf{Z}}_{(1,i)}$ and the true target representations $\widetilde{\mathbf{H}}_{(2,i)}$, i.e., positive pairs.
The objective function is defined as
\begin{equation}\label{eq:bgrl_loss}
    \ell(\theta, \phi)=2-\frac{2}{N} \sum_{i=0}^{N-1} \frac{\widetilde{\mathbf{Z}}_{(1, i)} \widetilde{\mathbf{H}}_{(2, i)}^{\top}}{||\widetilde{\mathbf{Z}}_{(1, i)}||_2 ||\widetilde{\mathbf{H}}_{(2, i)}||_2},
\end{equation}
where $\widetilde{\mathbf{Z}}_{(1, i)}\in\mathbb{R}^{d}$, $\widetilde{\mathbf{H}}_{(2, i)}\in\mathbb{R}^{d}$ and $||\cdot||_2$ is the $\ell_{2}$ vector norm operation.
It's worth noting that only the parameters of online encoder $f_{\theta}$ and predictor $p_{\theta}$ are updated with respected to the gradients from the objective function while the target encoder parameters $f_{\phi}$ are updated as an exponential moving average (EMA) of $f_{\theta}$ with a decay rate $\tau$, i.e., $f_{\phi} = \tau f_{\phi} + (1 - \tau) f_{\theta}$. Therefore, BGRL takes the outputs from the ensemble optimized parameters as targets to enhance the model step by step, which is a technique also referred to as bootstrapping.
 
\begin{figure}[t]
    \centering
    \includegraphics[width=1\linewidth]{./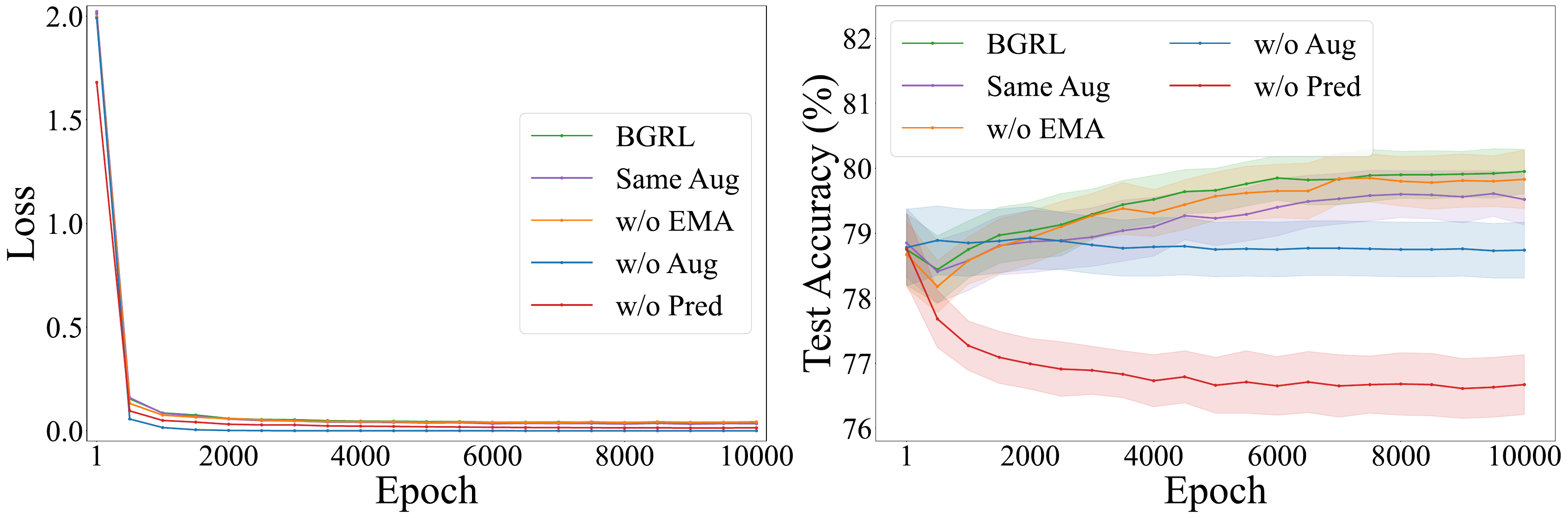}
    \caption{Loss and accuracy curves of BGRL and four variants on WikiCS.  \emph{Same Aug}: unifying graph augmentations, i.e., $\mathcal{T}_{1}=\mathcal{T}_{2}$. 
    \emph{w/o EMA}: removing EMA, i.e., $\tau=0$.
    \emph{w/o Aug}: removing graph augmentations.
    \emph{w/o Pred}: removing predictor.}
    \label{img:module_cmp_wikics}
\end{figure}

\section{Motivation}\label{sec:motivation}
In this section, we conduct an empirical exploration of BGRL to clarify the contributions of different modules in the framework and give theoretical insights into the nature of its success, which is attributed to its implicit instance-level decorrelation operation between the output representations of GNNs. Moreover, based on the aforementioned analysis, we also reveal the redundancy in BGRL. This discovery inspires us to further simplify the framework.

\subsection{Empirical Exploration}\label{sec:emp_motivation}

On the whole, BGRL contains the following components: graph augmentations, online encoder, target encoder, EMA, predictor, and cosine similarity loss function. For further understanding and simplification, at the early beginning, we perform corresponding ablation experiments to obtain an intuitive understanding of the role of the above components first.
From Figure~\ref{img:module_cmp_wikics}, we can draw the following two conclusions: (1) The key to the success of BGRL lies in graph augmentation and predictor. When only removing graph augmentations, the model performance drops rapidly and maintains almost a straight line throughout the entire training process. Even more, BGRL fails to learn information when only the predictor is removed, as the test accuracy keeps decreasing. (2) The BGRL framework exhibits redundancy. Overall, the contribution of using distinct augmentation functions or EMA mechanisms to BGRL is negligible, as both the loss and test accuracy curves exhibit a highly similar trend to the vanilla BGRL.
For detailed discussions about EMA, we refer readers of interest to Appendix \ref{appdix:ema_explain}.

The interpretation of graph augmentations is intuitive. Appropriate graph augmentations could help the graph encoder explore richer underlying semantic information of graphs~\cite{gssl_survey}. Accordingly, removing graph augmentations results in less learnable information and consequently a less inspired model, which is consistent with the more rapidly decreasing trend and lower bound of the training loss as well as the struggling test accuracy improvement in Figure~\ref{img:module_cmp_wikics}. 
However, the behavior of the predictor and loss function is still puzzling, which leads to the following theoretical analysis.

\subsection{Theoretical Analysis}\label{sec:thero_motivation}

\textbf{Assumptions.} 
Before theoretical analysis, we first introduce the linearity assumption regarding predictor $p_{\theta}$. Moreover, based on experimental observations, we assume a certain relationship between the online representations $\widetilde{\mathbf{H}}_1$ and target representations $\widetilde{\mathbf{H}}_2$.

\begin{assumption}\label{assump:lnear_p}
    (Linearity of predictor $p_{\theta})$:
    \begin{equation}\label{eq:linear_p}
        p_{\theta}(\widetilde{{\mathbf{H}}}_{(1,i)})=\mathbf{W}_{p}\widetilde{{\mathbf{H}}}_{(1,i)},
        where \text{\space} \mathbf{W}_p \in \mathbb{R}^{d \times d}.
    \end{equation}
\end{assumption}

\begin{assumption}\label{assump:same_repre}
    When optimizing by Eq.\eqref{eq:bgrl_loss}, the online representation and target representation of node $i$ progressively meet
    \begin{equation}\label{eq:same_repre}
        \widetilde{\mathbf{H}}_{(1, i)}=m_{i}\widetilde{\mathbf{H}}_{(2, i)},
        where \text{\space} m_{i}>0.
    \end{equation}
\end{assumption}

Under Assumption~\ref{assump:lnear_p}, we regard the parameters of the predictor network as a simple linear network, which is a commonly used simplification technique in the analysis process~\cite{SGC, GRACE, GCA}. 
Assumption~\ref{assump:same_repre} is motivated by the experimental results presented in Figure~\ref{img:same_rep_wikics_comp}, where we report the average cosine similarity $\bar{s}$ and average euclidean distance $\bar{d}$ between $\widetilde{\mathbf{H}}_{(1, i)}$ and $\widetilde{\mathbf{H}}_{(2, i)}$ for all nodes. Formally, we have the following formulas 
\begin{equation}
    \bar{s}=\frac{1}{N}\sum_{i=1}^{N}\frac{\widetilde{\mathbf{H}}_{(1, i)} \widetilde{\mathbf{H}}_{(2, i)}^{\top}}{||\widetilde{\mathbf{H}}_{(1, i)}||_2 ||\widetilde{\mathbf{H}}_{(2, i)}||_2},
\end{equation}
\begin{equation}
    \bar{d}=\frac{1}{N}\sum_{i=1}^{N}|| \widetilde{\mathbf{H}}_{(1, i)} - \widetilde{\mathbf{H}}_{(2, i)} ||_{2}.
\end{equation}

As we can see from Figure~\ref{img:same_rep_wikics_comp}, during the training process, $\bar{s}$ progressively converges to 1 and $\bar{d}$ decreases to a stable and small value, indicating that $\widetilde{\mathbf{H}}_{(1, i)}$ and $\widetilde{\mathbf{H}}_{(2, i)}$ share the same geometric direction but differ in vector length. \\
\textbf{Instance-level decorrelation.} 
Combining the two assumptions, we can further unravel how the predictor enables BGRL to produce discriminative representations without negative samples, i.e., the instance-level decorrelation.

\begin{figure}[t]
    \centering
	\subfloat[WikiCS]{\includegraphics[width=0.47\columnwidth]{./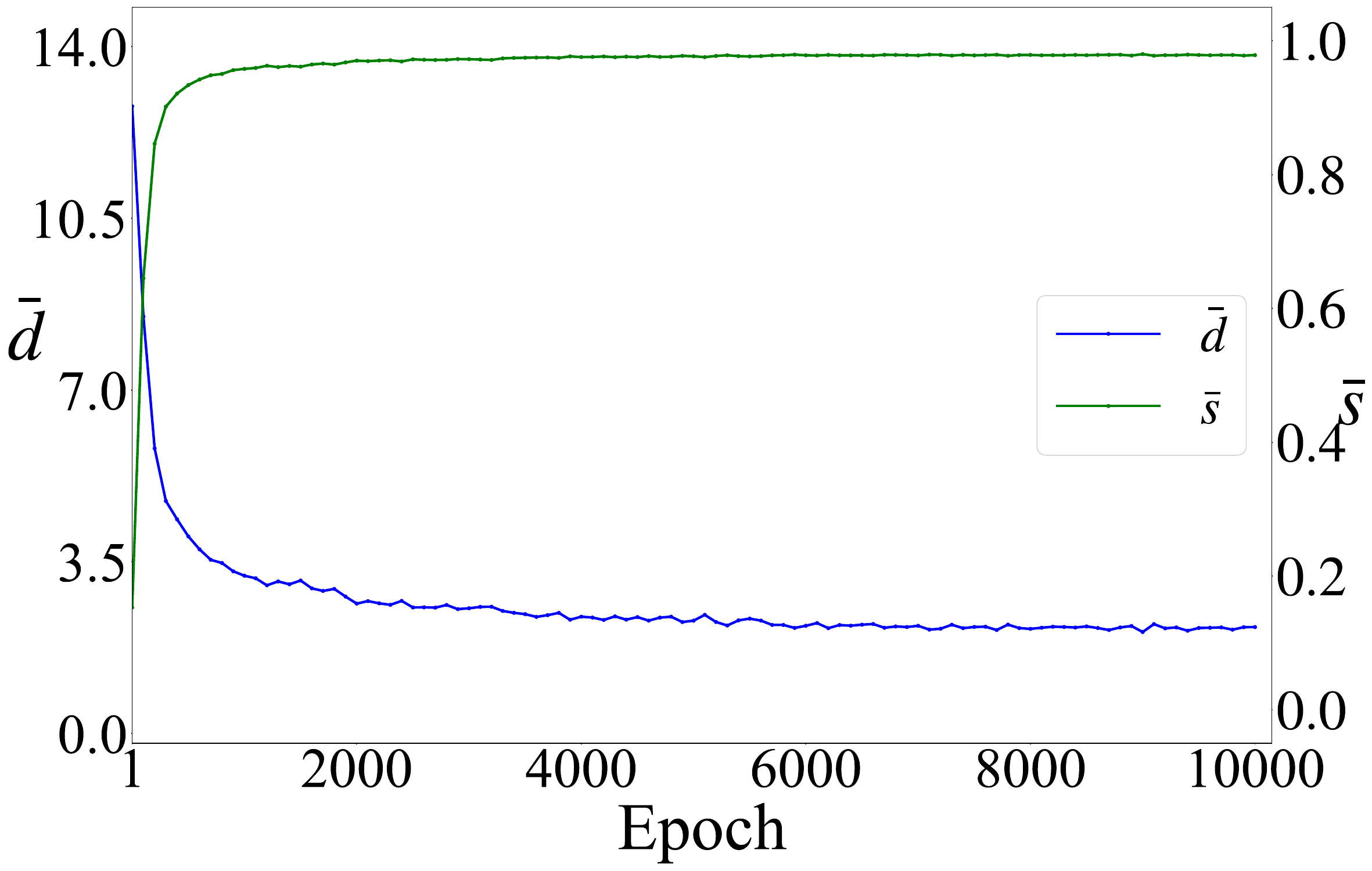}}\hspace{3.5pt}
	\subfloat[Amazon-Computers]{\includegraphics[width=0.47\columnwidth]{./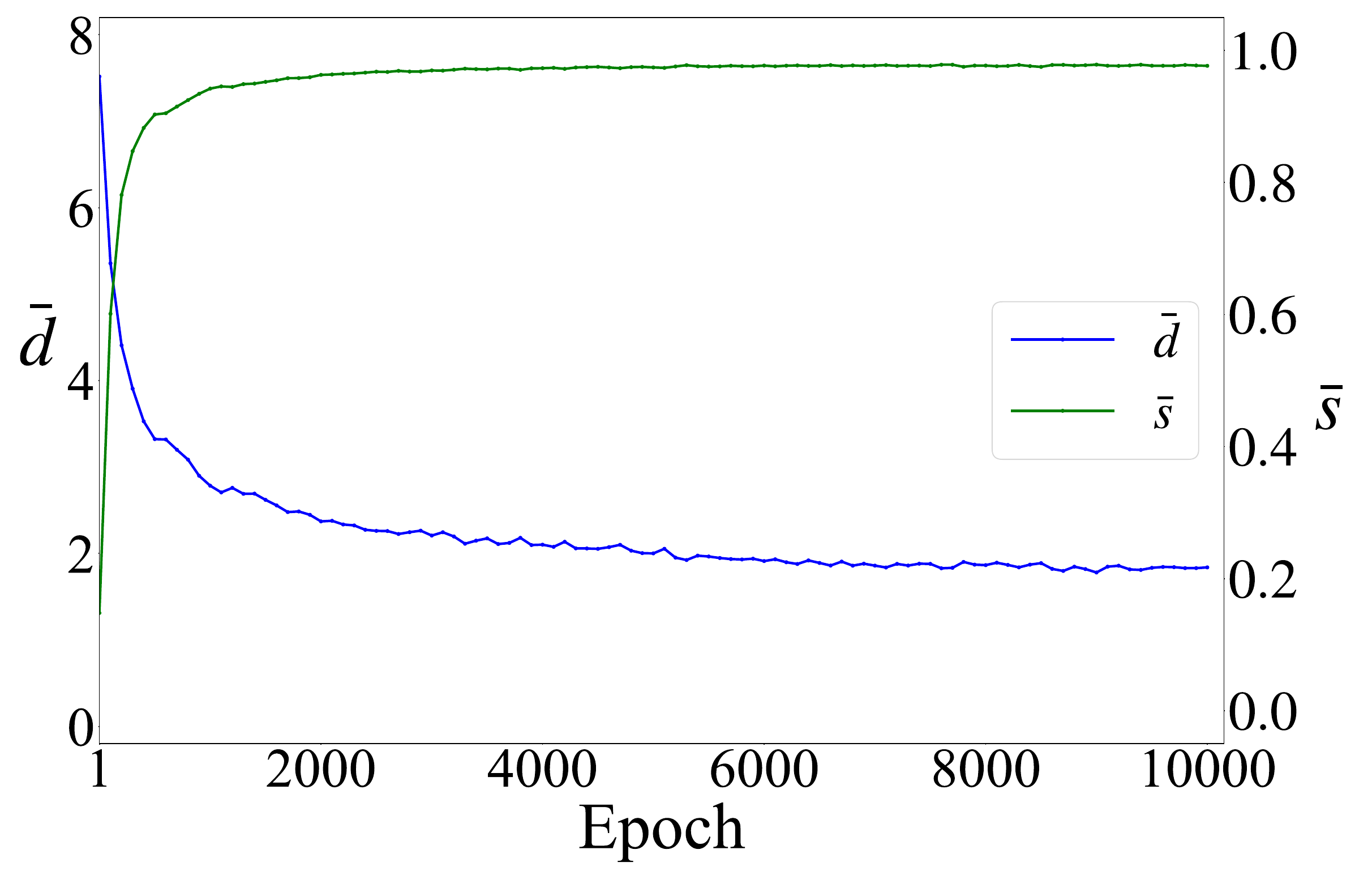}}
    \caption{Average cosine similarity $\bar{s}$ and average distance $\bar{d}$ between $\widetilde{\mathbf{H}}_{(1, i)}$ and $\widetilde{\mathbf{H}}_{(2, i)}$ for all nodes during training.}
    \label{img:same_rep_wikics_comp}
\end{figure}

\begin{corollary}\label{coro:z1_h1}
    Online representation $\widetilde{\mathbf{H}}_{(1, i)}$ for node $i$ is the eigenvector  of \space $\mathbf{W}_p$ with corresponding eigenvalue $\lambda_{i}$. Formally, we have
    \begin{equation}\label{eq:threom1}
        \mathbf{W}_p \widetilde{\mathbf{H}}_{(1, i)}=\lambda_{i} \widetilde{\mathbf{H}}_{(1, i)}, \text{\space} where \text{\space} \lambda_{i}>0.
    \end{equation}
\end{corollary}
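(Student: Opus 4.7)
The plan is to read Corollary~\ref{coro:z1_h1} as a fixed-point condition extracted from the BGRL loss at (near-)optimality, and then chain the two assumptions together. Concretely, at a minimizer of $\ell(\theta,\phi)$ in Eq.~\eqref{eq:bgrl_loss} each per-node cosine-similarity term is driven to its maximum value $1$, because the $N$ summands are bounded above by $1$ and the prefactor $-2/N$ means the loss is minimized exactly when each cosine equals $1$. This forces $\widetilde{\mathbf{Z}}_{(1,i)}$ to be a strictly positive scalar multiple of $\widetilde{\mathbf{H}}_{(2,i)}$, i.e.\ there exists $c_i>0$ with $\widetilde{\mathbf{Z}}_{(1,i)}=c_i\,\widetilde{\mathbf{H}}_{(2,i)}$.

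Next I would substitute the two assumptions. Assumption~\ref{assump:lnear_p} rewrites the left-hand side as $\widetilde{\mathbf{Z}}_{(1,i)}=\mathbf{W}_p\widetilde{\mathbf{H}}_{(1,i)}$, and Assumption~\ref{assump:same_repre} rewrites the right-hand side via $\widetilde{\mathbf{H}}_{(2,i)}=m_i^{-1}\widetilde{\mathbf{H}}_{(1,i)}$ with $m_i>0$. Combining them yields
\begin{equation*}
    \mathbf{W}_p\widetilde{\mathbf{H}}_{(1,i)} \;=\; \frac{c_i}{m_i}\,\widetilde{\mathbf{H}}_{(1,i)},
\end{equation*}
which is exactly Eq.~\eqref{eq:threom1} with $\lambda_i := c_i/m_i$. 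Since both $c_i$ and $m_i$ are positive, $\lambda_i>0$, giving the sign claim.

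The step I expect to be the main obstacle is the first one, i.e.\ the justification that the cosine similarity for every node can simultaneously reach $1$ under the shared linear predictor $\mathbf{W}_p$. The $N$ terms are coupled through the single matrix $\mathbf{W}_p$, so achieving alignment for each node requires $\mathbf{W}_p$ to act as a positive scalar on every $\widetilde{\mathbf{H}}_{(1,i)}$ at once. In a clean argument I would appeal to Assumption~\ref{assump:same_repre} (which is itself the empirical observation in Figure~\ref{img:same_rep_wikics_comp}) to say that training has actually driven the system into this aligned regime, so we may take the per-node cosine as $1$ at convergence; the corollary is then a structural consequence rather than a statement about an arbitrary $\mathbf{W}_p$. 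If a cleaner derivation is desired, I would alternatively differentiate $\ell$ with respect to $\widetilde{\mathbf{H}}_{(1,i)}$ and set the gradient to zero: the normalized-vector gradient of cosine similarity forces $\mathbf{W}_p^{\top}\widetilde{\mathbf{H}}_{(2,i)}$ to be collinear with $\mathbf{W}_p\widetilde{\mathbf{H}}_{(1,i)}$, and then Assumption~\ref{assump:same_repre} collapses this to the eigenvector relation above, with positivity inherited from $m_i>0$ and from the cosine being maximized (not minimized).
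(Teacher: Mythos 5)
Your proof is correct and follows exactly the paper's own argument: assume the per-node cosine similarity is driven to $1$ at the optimum, giving $\widetilde{\mathbf{Z}}_{(1,i)} = c_i\widetilde{\mathbf{H}}_{(2,i)}$ with $c_i>0$, then substitute Assumption~\ref{assump:lnear_p} and Assumption~\ref{assump:same_repre} to obtain $\mathbf{W}_p\widetilde{\mathbf{H}}_{(1,i)} = (c_i/m_i)\widetilde{\mathbf{H}}_{(1,i)}$ with a positive eigenvalue. The caveat you raise in the final paragraph---that a single shared $\mathbf{W}_p$ must act as a positive scalar on all $\widetilde{\mathbf{H}}_{(1,i)}$ simultaneously---is a fair observation that the paper does not address; the paper simply stipulates ``Assuming the loss function reaches the global optimum,'' treating the aligned regime as given, just as you do in your primary argument.
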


The proof is presented in Appendix \ref{proof:coro_z1_h1}.
Note that $\mathbf{W}_p$ is a $d \times d$ real square matrix and the number of corresponding eigenvalues $k$ satisfies $0<k\le d$. Therefore, the predictor implicitly performs rough node classification and the classes are linearly independent eigenvectors associated with the distinct eigenvalues of $\mathbf{W}_p$, i.e., the instance-level decorrelation.
We argue that instance-level decorrelation is essential to decrease the correlation between output representations and produce distinguishable inputs for downstream tasks.
In addition, corollary~\ref{coro:z1_h1} still holds when minimizing the cosine similarity under the above assumptions with opposite eigenvalues, i.e.,
$\mathbf{W}_p \widetilde{\mathbf{H}}_{(1, i)}=-\lambda_{i} \widetilde{\mathbf{H}}_{(1, i)} \text{\space}$. That is, what the loss function essentially do is to align the geometric directions between prediction and target.
However, removing the predictor is equivalent to setting $\mathbf{W}_p=\mathbf{I}$, i.e., identity matrix with single eigenvalue 1, resulting in the node representations being required to evolve into eigenvectors of the same eigenvalue. In this way, the correlation of node representations increases, leading to the distinguishability reduction of representations and performance degeneration shown in Figure \ref{img:module_cmp_wikics}. 

\begin{figure}[t]
    \centering
	\subfloat[without predictor]{\includegraphics[width=.33\columnwidth]{./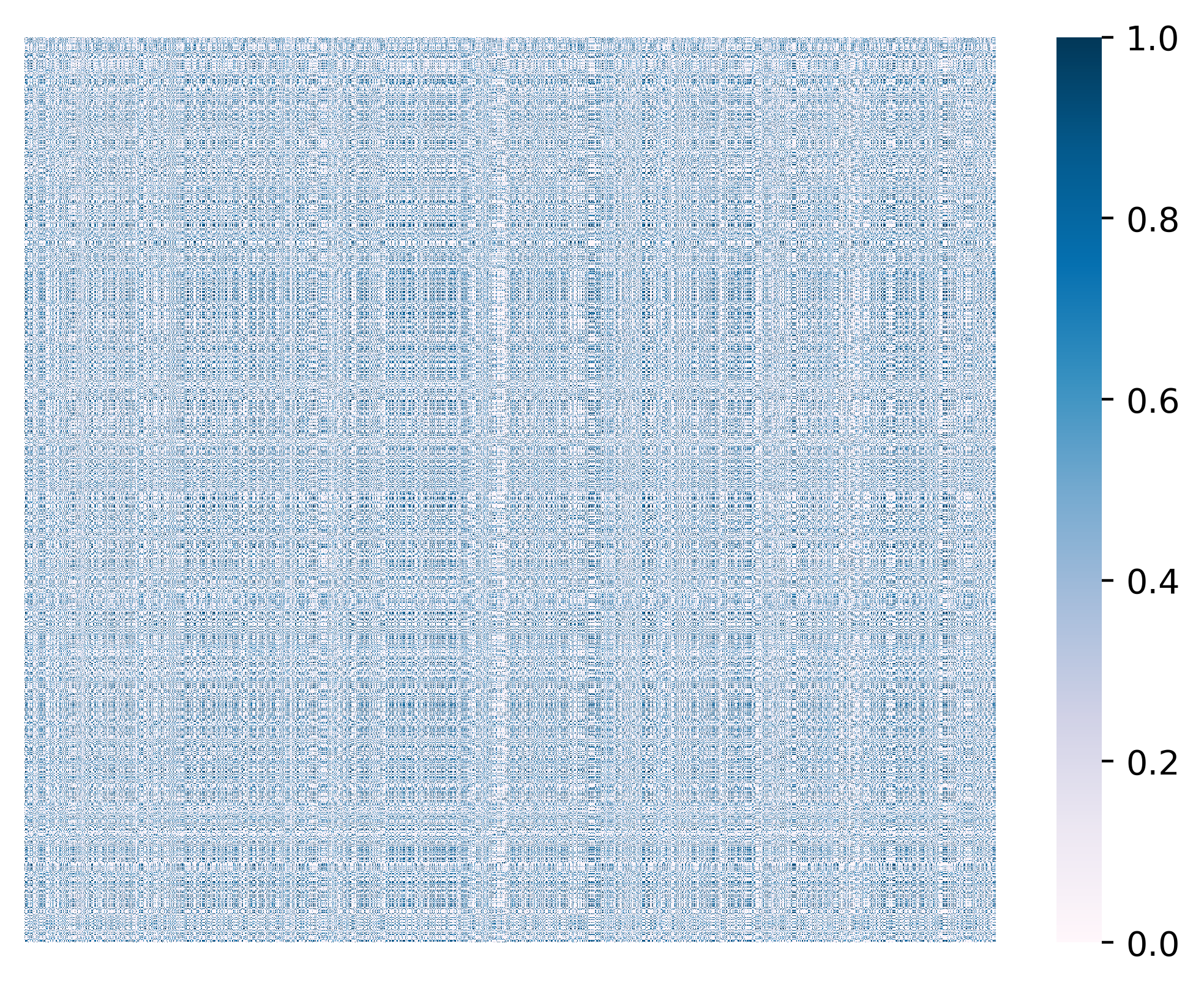}}
	\subfloat[minimizing similarity]{\includegraphics[width=.33\columnwidth]{./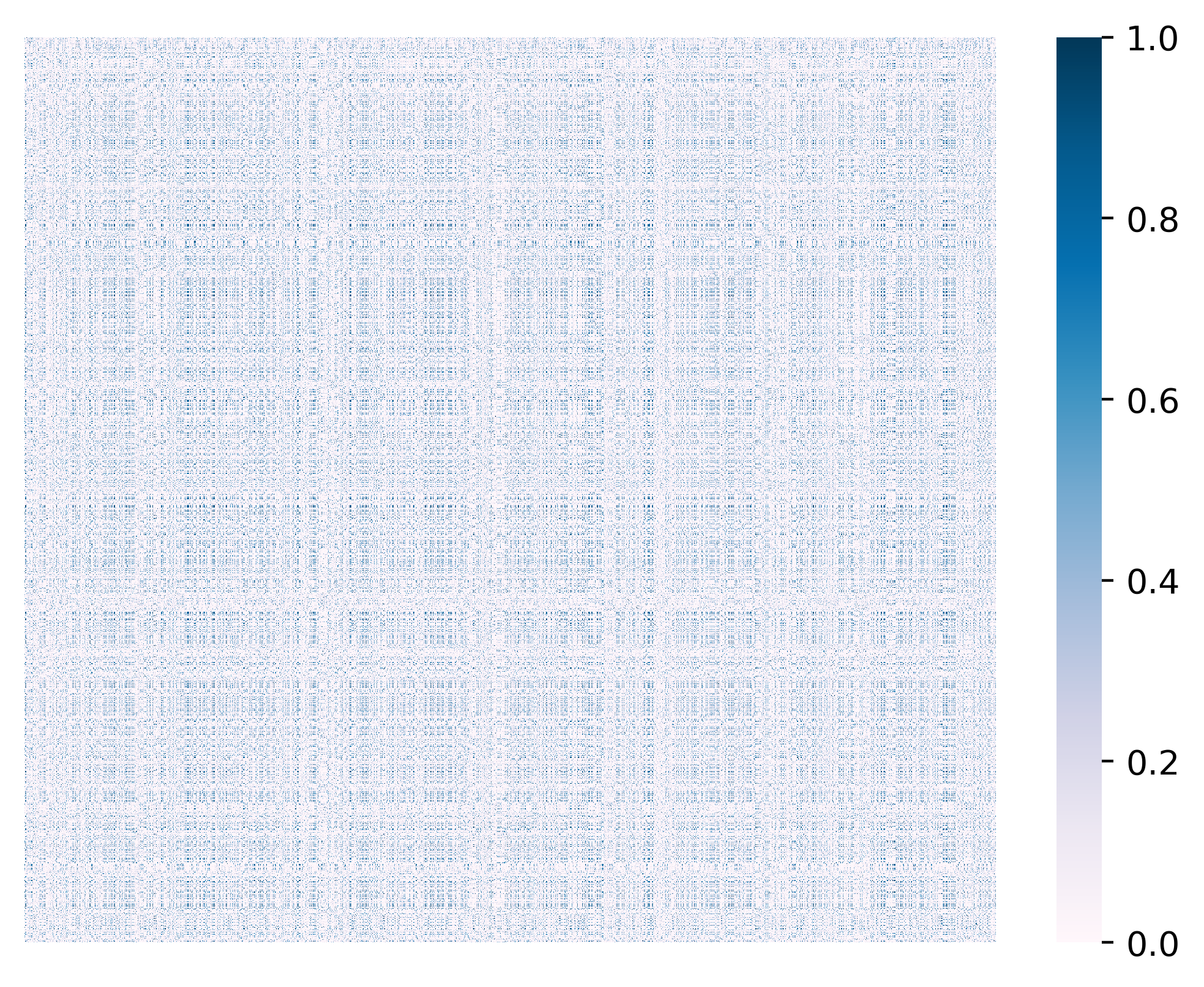}}
	\subfloat[default BGRL]{\includegraphics[width=.33\columnwidth]{./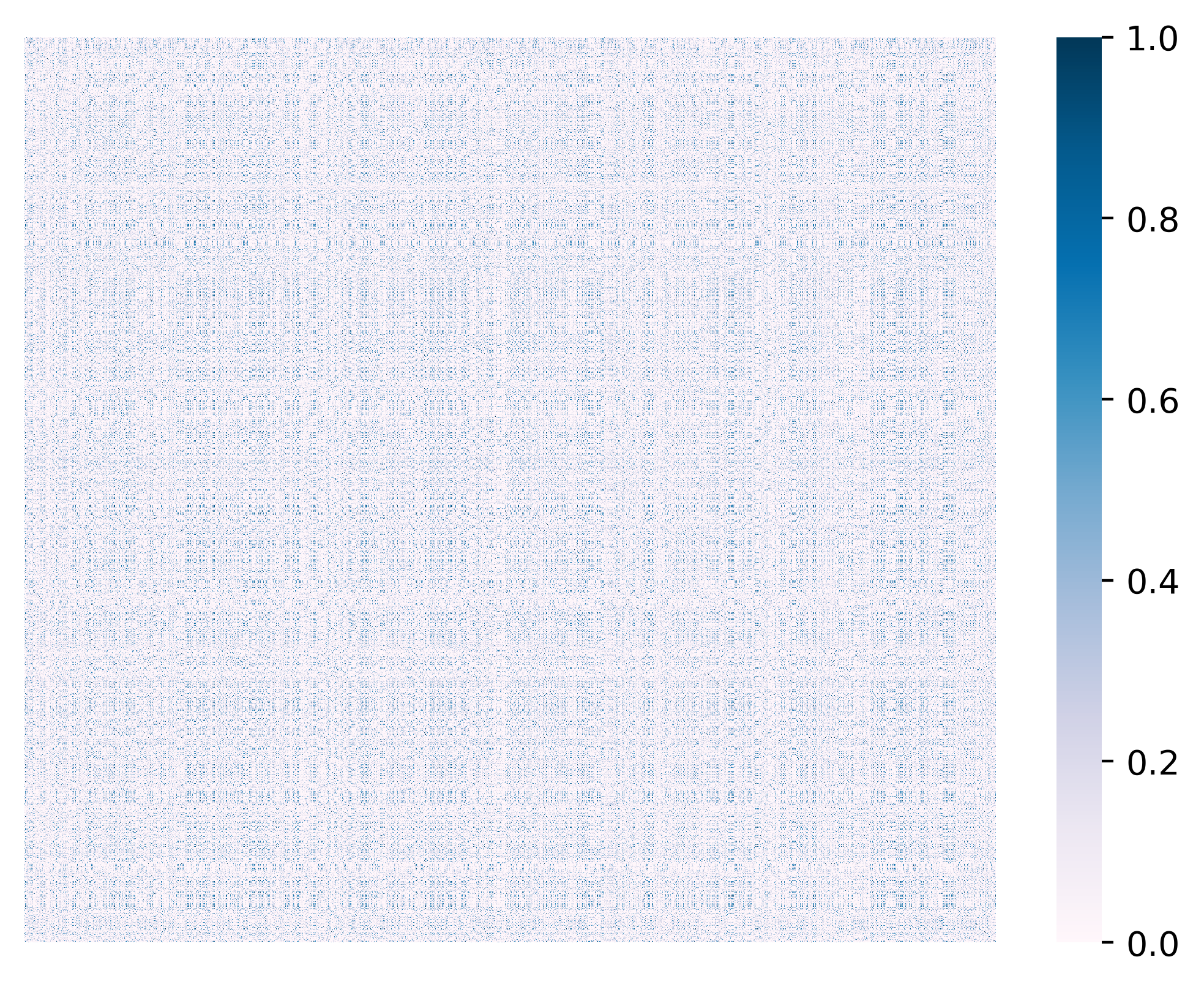}}
    \caption{Pearson correlation coefficient between different node representations when removing predictor (a), minimizing the similarity of positive pairs (b), and remaining the predictor and maximizing the similarity as the default setting of BGRL (c) on Amazon-Computers.}
    \label{img:node_corr_wikics_comp}
\end{figure} 

To further support our hypothesis, we provide the visualizations of the pearson correlation coefficient matrix of whether to use the predictor or minimize the similarity on the instance level in Figure~\ref{img:node_corr_wikics_comp}. As we can see, the correlation coefficient between different node representations is significantly lower than without the predictor, as long as the predictor is retained no matter whether the cosine similarity is maximized or minimized. \\
\textbf{Predictor inference.} 
The above analysis illustrates the predictor is a decisive factor to enable GNNs to learn distinguishable representations by instance-level decorrelation. However, we notice that learning predictor parameters to achieve decorrelation can result in slow convergence of GNNs, leading to suboptimal performance, especially in large-scale graphs. In Section~\ref{sec:conv_speed}, we present corresponding experimental results. Fortunately, Corollary~\ref{coro:z1_h1} indicates a connection between the predictor and the outputs of GNNs. We further show that the predictor can be directly from the covariance matrix of node representations without parameters, thus accelerating convergence speed and reducing parameters of BGRL.
\begin{theorem}\label{theo:infer_predictor}
    Suppose $\widetilde{\mathbf{H}}_1=\widetilde{\mathbf{H}}_2=\mathbf{H}$ following the zero-mean distribution and the covariance matrix of $\mathbf{H}$ satisfies the singular value decomposition (SVD) $\mathbf{\sum}=\frac{1}{N-1}\mathbf{H}^{\top}\mathbf{H}=\hat{\mathbf{U}}\hat{\mathbf{S}}\hat{\mathbf{V}}^{\top}$.
    When $\mathbf{W}_p$ is initialized as $\mathbf{W}_p=\epsilon \hat{\mathbf{U}} \hat{\mathbf{V}}^{\top}$, where all student singular values are $\epsilon$. As the training progresses, we have
    \begin{equation}
        \mathbf{W}_p = \frac{1}{N-1}\mathbf{H}^{\top}\mathbf{H}.
    \end{equation}
\end{theorem}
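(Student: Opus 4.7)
The plan is to view the theorem as a statement about the gradient dynamics of the linear predictor $\mathbf{W}_p$ under the BGRL objective of Eq.~\eqref{eq:bgrl_loss}, exploiting the symmetry created by the hypothesis $\widetilde{\mathbf{H}}_1=\widetilde{\mathbf{H}}_2=\mathbf{H}$. First I would substitute Assumption~\ref{assump:lnear_p} into the loss so that $\ell$ depends only on $\mathbf{W}_p$ and the data $\mathbf{H}$, and then differentiate with the target branch stop-gradiented. The resulting gradient-flow ODE has driving terms that can be collected into outer-product sums over the samples, which assemble precisely into the scatter matrix $\mathbf{H}^{\top}\mathbf{H}$, i.e.\ into $\boldsymbol{\Sigma}$ up to the prefactor.

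Next I would move to the SVD basis of $\boldsymbol{\Sigma}$. Because $\boldsymbol{\Sigma}$ is symmetric and positive semi-definite, its SVD coincides with its eigendecomposition, so $\hat{\mathbf{U}}=\hat{\mathbf{V}}$ and the stated initialisation $\mathbf{W}_p(0)=\epsilon\hat{\mathbf{U}}\hat{\mathbf{V}}^{\top}$ reduces to $\epsilon\mathbf{I}$, which trivially commutes with $\boldsymbol{\Sigma}$. I would parameterise $\mathbf{W}_p(t)=\hat{\mathbf{U}}\mathbf{D}(t)\hat{\mathbf{V}}^{\top}$ with $\mathbf{D}(0)=\epsilon\mathbf{I}$, substitute into the flow, and verify that the time-derivatives of the off-diagonal entries of $\mathbf{D}$ vanish whenever the entries themselves are zero, so alignment with the eigenbasis of $\boldsymbol{\Sigma}$ is preserved along the whole trajectory. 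The problem then decouples into $d$ independent scalar ODEs $\dot{d}_i=g(d_i,\hat{s}_i)$ with $d_i(0)=\epsilon>0$, and I would analyse these to show each flows to the stable fixed point $d_i^{\star}=\hat{s}_i$; recomposing gives $\mathbf{W}_p\to\hat{\mathbf{U}}\hat{\mathbf{S}}\hat{\mathbf{V}}^{\top}=\boldsymbol{\Sigma}$, as claimed.

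The main obstacle is the cosine normalisation: the gradient contains factors of the form $\|\mathbf{W}_p\mathbf{h}_i\|^{-1}$ and $\|\mathbf{W}_p\mathbf{h}_i\|^{-3}$ that couple the diagonal entries of $\mathbf{D}$ non-linearly and, a priori, threaten the decoupling argument above. I plan to lean on Assumption~\ref{assump:same_repre} here: in the regime where $\mathbf{W}_p\mathbf{h}_i$ is already a positive scalar multiple of $\mathbf{h}_i$, the normalisation becomes a sample-wise positive constant that rescales the per-sample contribution without rotating it, so the eigenbasis alignment established by the initialisation is maintained and each scalar ODE still has a unique stable positive fixed point. A minor side point is the sign ambiguity of the SVD on repeated singular values of $\boldsymbol{\Sigma}$; the initialisation $\epsilon\mathbf{I}$ is invariant to this ambiguity, so it raises no real difficulty and the convergence statement is unambiguous.
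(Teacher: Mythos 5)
Your strategy---differentiating the loss after the linear substitution, decoupling the gradient flow in the SVD basis of $\boldsymbol{\Sigma}$, reducing to $d$ scalar ODEs, and showing each mode converges to the corresponding $\hat{s}_i$---is precisely the content of the Teacher--Student linear-dynamics result that the paper invokes by citation in its proof. The paper frames BGRL as a Teacher--Student model with teacher $\mathbf{W}=\mathbf{I}$, rewrites the cosine loss Eq.~\eqref{eq:bgrl_loss} as a squared distance of $\ell_2$-normalized outputs, and then directly quotes the singular-value evolution $s(t,\hat{s})=\hat{s}e^{2\hat{s}t/\omega}/(e^{2\hat{s}t/\omega}-1+\hat{s}/\omega)$ (with fixed singular vectors) from its reference, concluding $\mathbf{W}_p\to\boldsymbol{\Sigma}$. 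So you are essentially proposing to re-derive the result the paper cites rather than cite it; the underlying mathematics is the same. Your remark that $\hat{\mathbf{U}}=\hat{\mathbf{V}}$ because $\boldsymbol{\Sigma}$ is symmetric PSD, so the stated initialization is simply $\epsilon\mathbf{I}$, is correct and a nice explicit simplification that the paper leaves implicit.

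That said, the way you propose to dispose of the cosine normalization is circular. Saying ``in the regime where $\mathbf{W}_p\mathbf{h}_i$ is already a positive scalar multiple of $\mathbf{h}_i$'' is exactly the fixed-point condition of Corollary~\ref{coro:z1_h1}; it does not hold along the trajectory, only at the optimum. Assumption~\ref{assump:same_repre} constrains the relation between $\widetilde{\mathbf{H}}_1$ and $\widetilde{\mathbf{H}}_2$, not between $\mathbf{W}_p\mathbf{h}_i$ and $\mathbf{h}_i$, so it does not give you the per-sample rescaling-without-rotation property you need to preserve diagonality of $\mathbf{D}(t)$. Invoking it here is assuming the conclusion. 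It is fair to note that the paper's own proof does not close this gap either: the cited evolution formula is for the unnormalized squared-error Teacher--Student flow, whereas the rewritten BGRL objective contains $\ell_2$-normalized student outputs, so the quoted dynamics do not strictly apply. You correctly spotted the obstacle that the paper glosses over, but the fix you sketched does not actually remove it; an honest proof along your lines would either need to verify the decoupling directly for the normalized gradient (showing the $\|\mathbf{W}_p\mathbf{h}_i\|$-dependent prefactors do not generate off-diagonal terms in $\mathbf{D}$), or restrict the claim to the unnormalized surrogate as the paper implicitly does.
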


The proof  is presented in Appendix \ref{proof:theo_infer_predictor}.
Here $\mathbf{H}$ is $\ell_{2}$-normalized.
Theorem~\ref{theo:infer_predictor} provides a path to obtain the prediction representations without any parameters. For the case of non-linear predictor and unequal online and target representations, we leave it for future work.
Eventually, we come to the conclusion that the predictor, graph augmentations, and geometric direction alignment are crucial for BGRL, which leads us to the subsequent simplifications.  \\
\textbf{Further discussion on decorrelation.}
In graphs, the design of BGRL has been continuously referenced but lacks in-depth exploration~\cite{AFGRL,LaGraph,BGRL}.
Here, we demonstrate how BGRL scatter representations from the instance-level decorrelation perspective. Likewise, other works such as CCA-SSG~\cite{CCA-SSG} and its predecessor Barlow Twins~\cite{BarlowTinws} in images study dimension-level decorrelation that reduces the inter-dimension correlation, which may not perform well on low-dimensional datasets. Regardless, these methods share a common underlying mechanism, i.e., decorrelation.
Hence, exploring or combining the aforementioned methods remains highly promising and contributes significantly to the graph community.

\begin{figure*}[t]
    \centering
    \includegraphics[width=1\linewidth]{./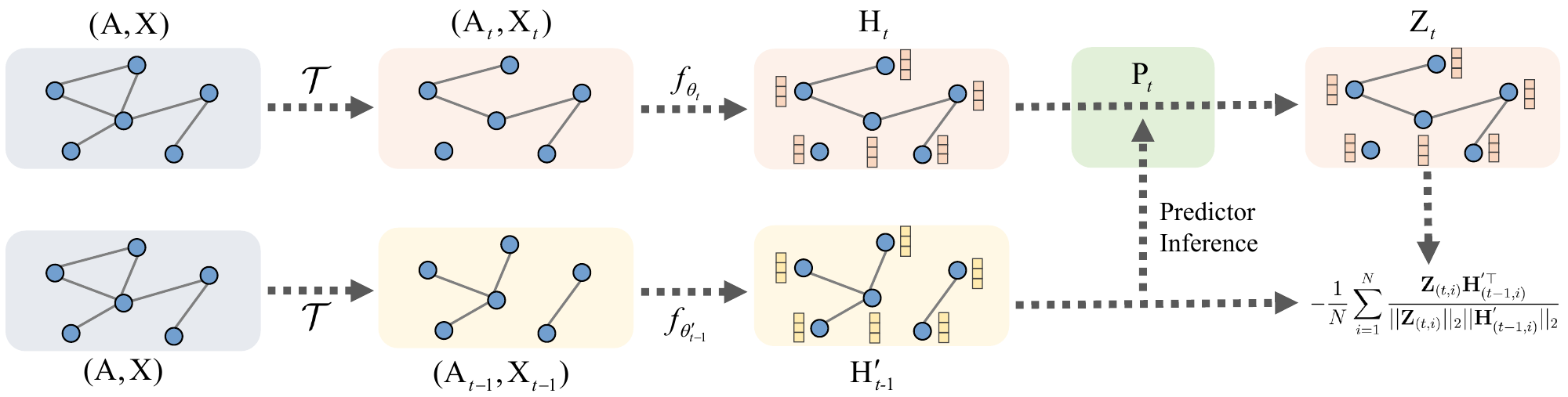}
    \caption{Illustration of the proposed SGCL framework. During each training iteration $t$, we only augment the original graph once and feed the augmented graph $(\mathbf{A}_{t},\mathbf{X}_{t})$ into GNN with parameters $\theta_{t}$ to obtain node representations $\mathbf{H}_{t}$. We use the predictor to generate prediction $\mathbf{Z}_{t}$ of the node representations $\mathbf{H}_{t-1}^{'}$ produced from the optimized parameters $\theta_{t-1}^{'}$ in previous iteration $t-1$, where the predictor is directly computed from $\mathbf{H}_{t-1}^{'}$. Finally, the similarity between $\mathbf{Z}_{t}$ and $\mathbf{H}_{t-1}^{'}$ is maximized. }
    \label{framework}
\end{figure*}

\section{Method}
Motivated by Section~\ref{sec:motivation}, in this section, we present the proposed simple framework SGCL in detail.
As illustrated in Figure \ref{framework}, SGCL is a compact \emph{pipeline} framework, that comprises only one graph augmentation function and one graph encoder without any other parameters. Contrary to BGRL, which optimizes graph encoders via maximizing the similarity between the prediction and target produced from two augmented views at each iteration, SGCL performs the agreement games between the prediction of node representations produced from iteration $t$ and pure node representations from iteration $t-1$. In the remainder of this section, we will first introduce the details of SGCL from the following four major components and end with a technical comparison with previous GCL methods.

\subsection{Graph Augmentation}
As a crucial role in boosting model performance, GCL methods, including BGRL, primarily maintain the same augmentation rule. That is, they generate two graph views from two distinct augmentation functions at each iteration. A slight difference is that BGRL only requires positive pairs, i.e., prediction and target representations. However, as demonstrated in Section~\ref{sec:emp_motivation}, applying two distinct augmentation functions is unnecessary. Moreover, we argue that generating two views at each training iteration may be redundant for BGRL since the target representations are replaceable by the previous outputs, as described in Section~\ref{sec:encoder}. Therefore, we reduce the graph augmentation operations in half and only produce one augmented view at each iteration in our framework.

For the augmentation strategies, we combine two straightforward and practical graph augmentation operations~\cite{BGRL}, i.e., feature masking and edge perturbation to set up our augmentation function $\mathcal{T}$.
In particular, at each training iteration $t$, given a graph $\mathcal{G}=(\mathcal{V}, \mathcal{E})$ with associated feature matrix $\mathbf{X}$ and adjacency matrix $\mathbf{A}$, we randomly drop a portion of edges and node feature dimensions following a specific Bernoulli distribution respectively,  
\begin{equation}\label{eq:drop_edge}
    \mathcal{E}_t = \text{Bernoulli}(\mathcal{E}, 1 - p_{e}), \space 0 < p_{e} < 1,
\end{equation}
\begin{equation}\label{eq:drop_feat}
    \mathbf{X}_t = \text{Bernoulli}(\mathbf{X}, 1 - p_{f}), \space 0 < p_{f} < 1,
\end{equation}
where $p_{e}$ and $p_{f}$ is the drop ratio for edges and feature dimensions. $(\mathbf{A}_{t}, \mathbf{X}_{t})$ is the corresponding input for the graph encoder.

\subsection{Graph Encoder} \label{sec:encoder}
Inspired by Section~\ref{sec:emp_motivation}, we remove the EMA of BGRL, resulting in $f_{\phi}=f_{\theta}$. Kindly note that only $f_{\theta}$ is updated by the gradient and $f_{\phi}$ can be regarded as the one-time backup of the optimized parameters of $f_{\theta}$ at each iteration.
Therefore, we propose to reduce the online encoder and target encoder to one single encoder.
To construct the online and target representations, we regard the encoder $f_{\theta_{t}}$ of current iteration $t$ as the online encoder and the optimized encoder $f_{\theta_{t-1}^{'}}$ of the previous iteration $t-1$ as the target encoder. Note that the superscript in $\theta_{t-1}^{'}$ means the gradient is stopped through this computational branch. Then, we take the augmented views $(\mathbf{A}_t, \mathbf{X}_t)$ and $(\mathbf{A}_{t-1}, \mathbf{X}_{t-1})$ as the input of corresponding graph encoders.
In specific, at iteration $t$, we feed the single augmented view $(\mathbf{A}_{t}, \mathbf{X}_{t})$ into the graph encoder $f_{\theta_{t}}$ to obtain node representations $\mathbf{H}_{t}=f_{\theta_{t}}(\mathbf{A}_{t}, \mathbf{X}_{t})$, which are referred to as \emph{online} representations. For \emph{target} representations, we adopt the node representations produced from iteration $t-1$ with optimized parameters $f_{\theta_{t-1}^{'}}$ and corresponding graph view $(\mathbf{A}_{t-1}, \mathbf{X}_{t-1})$, i.e., $\mathbf{H}_{t-1}^{'}=f_{\theta_{t-1}^{'}}(\mathbf{A}_{t-1}, \mathbf{X}_{t-1})$, which can be easily obtained after backward and gradient updating at iteration $t-1$. For target encoder with optimized parameters at first iteration, we get them by random initialization. 
As for the specific architecture of graph encoder, we simply adopt the widely studied graph convolutional networks (GCN)~\cite{GCN} for a fair comparison. Note that the graph encoder can be arbitrarily specified here, such as GraphSAGE\cite{GraphSAGE}, GAT~\cite{GAT}.

\subsection{Inferential Predictor}
In order to obtain the prediction of target representation, we adopt an inferential method instead of a parameterized multilayer perceptron (MLP) according to theorem~\ref{theo:infer_predictor}, which results in quicker convergence and a more compact model with less parameters. To be more specific, the predictor can be formalized as
\begin{equation}\label{eq:predictor_infer}
    \mathbf{P}_{t}=\frac{\widebar{\mathbf{H}}_{t-1}^{'\top}\widebar{\mathbf{H}}_{t-1}^{'}}{N-1},
\end{equation}
where $\widebar{\mathbf{H}}_{t-1}^{'}$ is the target representations after zero-mean and $\ell_{2}$-normalization operations.
In specific, for each node $i$, the representation can be formalized as
\begin{equation}
    \widebar{\mathbf{H}}_{(t-1, i)}^{'} = \frac{\mathbf{H}_{(t-1, i)}^{'} - \textbf{m}}{||\mathbf{H}_{(t-1, i)}^{'} - \textbf{m}||_2},
\end{equation}
where $\textbf{m} = \frac{1}{N}\sum_{j=1}^{N}\mathbf{H}_{(t-1, j)}^{'}$.
Then we form the prediction $\mathbf{Z}_{t}$ from the corresponding online representation via the following formula,
\begin{equation}\label{eq: prediction}
    \mathbf{Z}_{t}=\mathbf{H}_{t}\mathbf{P}_{t}.
\end{equation}
 

Also note that though both online representation $\mathbf{H}_{t}$ and target representation $\mathbf{H}_{t-1}^{'}$ are optional to compute the covariance matrix under the assumption of theorem~\ref{theo:infer_predictor}, we empirically observe that the latter gives better performance and we will discuss it in  Section~\ref{sec:infer_predictor_ablation}.

\subsection{Objective Function}
Our objective function aims to maximize the cosine similarity between positive samples, i.e., the prediction $\mathbf{Z}_{(t, i)}$ of online representations produced from iteration $t$ and target representations $\mathbf{H}_{(t-1, i)}^{'}$ produced from iteration $t-1$
\begin{equation}\label{eq:loss}
    \mathcal{L}_{\theta} = 1 - \frac{1}{N}\sum_{i=1}^{N} \frac{\mathbf{Z}_{(t, i)}\mathbf{H}_{(t-1, i)}^{'\top}}{\vert\vert \mathbf{Z}_{(t, i)} \vert\vert_2 \vert\vert \mathbf{H}_{(t-1, i)}^{'} \vert\vert_2}.
\end{equation}

Despite Section~\ref{sec:thero_motivation} stating that the geometric direction alignment is the duty of loss function and both maximizing and minimizing the similarity is feasible, we pick the latter due to its relatively better performance and more acceptable meaning.

In practice, BGRL symmetrizes the loss function Eq.\eqref{eq:bgrl_loss} by predicting the target representation of the first augmented view using the online representation of the second. Here we do not follow the design since we only generate one single augmented view at each iteration and we empirically found the succinct design works well and computationally efficient. To help better understand the training process, we provide the detailed algorithm in Appendix~\ref{appdix:algorithm}.
When comes to the inference stage, we follow the previous literature~\cite{BGRL, CCA-SSG, GRACE} to feed the original graph $(\mathbf{A}, \mathbf{X})$ without any augmentations into the graph encoder $f_{\theta}(\cdot)$ to obtain final node representations $\mathbf{H}=f_{\theta}(\mathbf{A}, \mathbf{X})$ for various downstream benchmarks.

\subsection{Comparison With Previous Methods}
In this subsection, we systematically compare SGCL with previous graph contrastive learning frameworks, not limited to BGRL, including DGI~\cite{DGI}, MVGRL~\cite{MVGRL}, GRACE~\cite{GRACE}, GCA~\cite{GCA}, BGRL~\cite{BGRL}, AFGRL~\cite{AFGRL}, COSTA~\cite{COSTA}, CCA-SSG~\cite{CCA-SSG}. Table~\ref{tab:tech comparison} summarizes the technical differences between SGCL and previous methods. \\
\textbf{Negative pairs free}.
Previous methods typically rely on various negative pairs to maintain the uniformity property~\cite{algin_uniform} of the representations and avoid model collapse. For example, DGI and MVGRL obtain negative pairs through node feature shuffling and graph sub-sampling, respectively. GRACE considers the other nodes from two augmented views as negative samples. However, for the natural scarcity of negative samples and the complex connections between nodes in graphs, mining negative samples can be costly. BGRL and AFGRL attempt to address this issue by eliminating the need for negative pairs, but their asymmetric architecture and EMA design can be intricate and difficult to understand. CCA-SSG has turned to another possibility by decorrelating the feature dimensions based on Canonical Correlation Analysis~\cite{CCA}. However, CCA-SSG may not work well on datasets with low feature dimensions, as it essentially performs dimension reduction. In contrast, SGCL does not require any negative samples and has a concise design. \\
\textbf{One single encoder}.
To further improve model performance, most of the previous methods introduce additional modules. Both DGI and MVGRL devise an additional discriminator for mutual information estimation. GRACE and COSTA employ a projector to improve expressive ability and COSTA further provides additional optional encoders for multi-view learning. BGRL and AFGRL adopt the asymmetric architecture and EMA update mechanism to avoid model collapse. However, SGCL only needs a single encoder, which reduces model parameters and complexity. \\
\textbf{One single augmented view.}
For contrastive pairs construction, most previous methods produce two augmented graph views as the input of graph encoders at each iteration, except for DGI, COSTA and AFGRL. DGI performs augmentations once each iteration, yet it still needs to use an additional discriminator. COSTA allows for one augmentation in a single-view setting but at the expense of accuracy mostly. AFGRL does not require any augmentation, however, it requires KNN and K-means to hunt for local and global positive samples, both of which are even more time-consuming than graph augmentations. Nevertheless, SGCL only requires one single augmented view per iteration, which reduces the execution time and hyperparameter tuning time.

\section{Experiments}
In this section, we compare SGCL with state-of-the-art methods on eight public benchmarks. We report the averaged performance over twenty random dataset divisions and model initializations for all datasets apart from ten model initializations for ogbn-Arxiv, ogbn-MAG and ogbn-Products. For more experiments details, including dataset descriptions and implementation details, we refer readers of interest to Appendix \ref{appdix:exp_details}.

\begin{table*}[t]
    \centering
    \caption{Node classification accuracy ($\%$) on eight benchmark datasets. The \textbf{boldfaced} score denotes the best result. A.R.: average rank. OOM: out-of-memory on a 24GB RTX 3090Ti GPU.}
    \label{tab:node classification}
    \resizebox{1\linewidth}{!}{\begin{tabular}{l c c c c c c c c c c}
            \toprule
             & \textbf{Methods} & \textbf{WikiCS}       & \textbf{Amazon-Computers} & \textbf{Amazon-Photos} & \textbf{Coauthor-CS}  & \textbf{Coauthor-Physics} & \textbf{ogbn-Arxiv} & \textbf{ogbn-MAG} & \textbf{ogbn-Products} & \textbf{A.R.} $\downarrow$ \\
            \midrule
             & MLP              & 71.98 $\pm$ 0.00          & 73.81 $\pm$ 0.00              & 78.53 $\pm$ 0.00           & 90.37 $\pm$ 0.00          & 93.58 $\pm$ 0.00              & 56.30 $\pm$ 0.30          & 22.10 $\pm$ 0.30          & 61.06 $\pm$ 0.06          & 11.0                \\
             & GCN              & 77.19 $\pm$ 0.12          & 86.51 $\pm$ 0.54              & 92.42 $\pm$ 0.22           & 93.03 $\pm$ 0.31          & 95.65 $\pm$ 0.16              & 71.74 $\pm$ 0.29          & 30.10 $\pm$ 0.30          & 75.64 $\pm$ 0.21          & 6.6        \\
             & GAT              & 77.65 $\pm$ 0.11          & 86.93 $\pm$ 0.29              & 92.56 $\pm$ 0.35           & 92.31 $\pm$ 0.24          & 95.47 $\pm$ 0.15              & 70.60 $\pm$ 0.30          & 30.50 $\pm$ 0.30          & \textbf{79.45 $\pm$ 0.59}          & 6.3          \\
            \midrule
            
             & DGI              & 75.35 $\pm$ 0.14          & 83.95 $\pm$ 0.47              & 91.61 $\pm$ 0.22           & 92.15 $\pm$ 0.63          & 94.51 $\pm$ 0.52              & 65.10 $\pm$ 0.40          & OOM          & OOM          & 11.6                   \\
             & MVGRL            & 77.52 $\pm$ 0.08          & 87.52 $\pm$ 0.11              & 91.74 $\pm$ 0.07           & 92.11 $\pm$ 0.12          & 95.33 $\pm$ 0.03              & 68.10 $\pm$ 0.10          & OOM          & OOM          & 10.4                     \\
             & GRACE            & 77.97 $\pm$ 0.63          & 86.50 $\pm$ 0.33              & 92.46 $\pm$ 0.18           & 92.17 $\pm$ 0.04          & OOM                       & OOM          & OOM          & OOM                   & 10.6                   \\
             & GCA              & 78.35 $\pm$ 0.05          & 88.94 $\pm$ 0.15              & 92.53 $\pm$ 0.16           & 93.10 $\pm$ 0.01          & 95.73 $\pm$ 0.03              & 68.20 $\pm$ 0.20          & OOM          & OOM          & 6.6                     \\
             & COSTA            & 79.12 $\pm$ 0.02          & 88.32 $\pm$ 0.03              & 92.56 $\pm$ 0.45           & 92.95 $\pm$ 0.12          & 95.60 $\pm$ 0.02              & OOM          & OOM          & OOM                   & 7.8                     \\

             & AFGRL            & 77.62 $\pm$ 0.49          & 89.88 $\pm$ 0.33              & 93.22 $\pm$ 0.28           & 93.27 $\pm$ 0.17          & 95.69 $\pm$ 0.10              & OOM          & OOM          & OOM                   & 6.4                    \\
             & CCA-SSG          & 79.08 $\pm$ 0.53          & 88.74 $\pm$ 0.28              & 93.14 $\pm$ 0.14           & \textbf{93.32 $\pm$ 0.22}          & 95.38 $\pm$ 0.06              & 69.22 $\pm$ 0.22          & 31.78 $\pm$ 0.38          & 70.18 $\pm$ 0.15          & 5.3         \\
             & BGRL             & \textbf{79.98 $\pm$ 0.10} & 90.34 $\pm$ 0.19              & 93.17 $\pm$ 0.30           & 93.31 $\pm$ 0.13 & 95.73 $\pm$ 0.05              & 71.64 $\pm$ 0.12          & 32.18 $\pm$ 0.15          & 73.97 $\pm$ 0.05          & 2.8           \\
             & GraphMAE             & 79.92 $\pm$ 0.68 & 89.88 $\pm$ 0.10              & 93.41 $\pm$ 0.10           & 92.96 $\pm$ 0.09 & 95.40 $\pm$ 0.06              & \textbf{71.75 $\pm$ 0.17}          & 32.61 $\pm$ 0.11          & 70.23 $\pm$ 0.10          & 3.8          \\
             & SGCL        & 79.85 $\pm$ 0.53          & \textbf{90.70 $\pm$ 0.30}     & \textbf{93.46 $\pm$ 0.30}  & 93.29 $\pm$ 0.17          & \textbf{95.78 $\pm$ 0.11}     & 70.99 $\pm$ 0.09          & \textbf{32.71 $\pm$ 0.09}          & 75.96 $\pm$ 0.11         & \textbf{2.0}            \\
            \bottomrule
        \end{tabular}
    }
\end{table*}

\begin{table*}[h]
    \centering
    \small
    \caption{Comparison of the number of parameters (\#Paras), GPU memory (Mem) and execution time per epoch (Time) on a set of standard benchmark graphs. - indicates running out of memory on a 24GB RTX 3090Ti GPU. }
    \resizebox{\linewidth}{!}{\begin{tabular}{lccccccccccccccccccc}
            \toprule
                      & \multicolumn{3}{c}{\textbf{WikiCS}}
                      &                                     & \multicolumn{3}{c}{\textbf{Amazon-Computers}}
                      &                                     & \multicolumn{3}{c}{\textbf{Coauthor-Physics}}
                      &                                     & \multicolumn{3}{c}{\textbf{ogbn-Arxiv}}      
                      &                                     & \multicolumn{3}{c}{\textbf{ogbn-Products}}       \\
            \cmidrule{2-4}  \cmidrule{6-8} \cmidrule{10-12} \cmidrule{14-16} \cmidrule{18-20}
                      & \#Paras                             & Mem                                           & Time    &  & \#Paras & Mem    & Time    &  & \#Paras & Mem    & Time    &  & \#Paras & Mem     & Time   &  & \#Paras & Mem     & Time    \\
            \midrule
            GRACE     & 1.10M                               & 5.69GB                                        & 0.1549s &  & 1.57M   & 8.07GB & 0.1859s &  & -       & -      & -       &  & -       & -       & -       &  & -       & -       & -       \\
            AFGRL     & 4.82M                               & 6.40GB                                        & 2.2395s &  & 1.84M   & 4.88GB & 1.3542s &  & -       & -      & -       &  & -       & -       & -       &  & -       & -       & -       \\
            CCA-SSG   & 1.36M                               & 2.36GB                                        & 0.0344s &  & 0.66M   & 2.18GB & 0.0687s &  & 4.57M   & 7.04GB & 0.5000s &  & 0.33M   & 7.72 GB & 0.1770s &  & 0.09M   & 20.11 GB & 3.0537s \\
            BGRL      & 0.84M                               & 4.59GB                                        & 0.0552s &  & 0.59M   & 2.96GB & 0.0296s &  & 4.51M   & 6.87GB & 0.0864s &  & 0.46M   & 10.67GB & 0.2920s &  & 0.19M   & 20.88GB & 1.0597s\\
            GraphMAE      & 2.71M                               & 2.16GB                                        & 0.0336s &  & 3.67M   & 2.16GB & 0.0439s &  & 19.37M   & 11.98GB & 0.3142s &  & 3.42M   & 13.70GB & 0.3973s &  & 0.18M   & 18.67GB & 1.6947s\\
            SGCL & 0.29M                               & 1.42GB                                        & 0.0096s &  & 0.23M   & 1.36GB & 0.0084s &  & 2.19M   & 4.87GB & 0.0317s &  & 0.17M   & 5.11GB  & 0.0882s &  & 0.03M   & 11.11GB  & 0.3856s \\
            \bottomrule
        \end{tabular}
    }
    \label{tab:efficiency}
\end{table*}

\subsection{Experimental Analysis}
\subsubsection{Overall Performance}
We report the summarized node classification performance in Table \ref{tab:node classification}. As we can observe, SGCL matches the performance of supervised baselines on all datasets and outperforms previous state-of-the-art methods on 5 out of 8 datasets. Moreover, SGCL has competitive results on the other 3 datasets and gives the highest average ranking on all datasets compared to other baselines. In particular, in addition to  WikiCS, Coauthor-CS, and ogbn-Arxiv, SGCL outperforms the most powerful self-supervised baseline BGRL, whose reported results are obtained by double augmentation hyperparameters tuning, more complex model architecture and parameter updating mechanism. 
It is worth mentioning that the proposed method achieves significant performance improvements on the largest dataset ogbn-Products, which verifies the effectiveness of SGCL. We attribute this improvement to the fact that the inferential predictor can facilitate a better convergence of the graph encoder, which will be further elaborated in Section~\ref{sec:conv_speed}.

\subsubsection{Efficiency}
In Table \ref{tab:efficiency}, we compare the number of model parameters, GPU memory cost and execution time for each training iteration on three medium-scale datasets and two large-scale datasets.
The results are obtained with the official code and hyperparameters or the closest we could reach the reported performance.
Overall, our methods achieve the lowest number of parameters, training time and GPU memory cost all the time. 
Compared to AFGRL which adopts the same architecture as BGRL, we yield up to two orders of magnitude training speedup with only 1/16 parameters and 1/4 memory on WikiCS.
Compared to GraphMAE which relies on an encoder-decoder framework, our approach achieves competitive performance with about 1/10 parameters, 1/2 GPU memory cost and 5-10 times execution speedup. 
As for BGRL which is known for its scalability, we could further cut half of its memory and run three times faster on ogbn-Arxiv and ogbn-Products datasets.
The results demonstrate the effectiveness of our simple SGCL with even better (or competitive) performance.
Note that we perform subgraph sampling on ogbn-Products dataset, so the sample efficiency may affect the speed. With more efficient sampling implementation, the speed-up effect will be more obvious.
We kindly note that CCA-SSG gives relatively good memory usage since its official code adopts a memory-efficient framework DGL~\cite{dgl}, whereas we use PyG~\cite{pyg}.

\begin{figure}[t]
    \centering
	\subfloat[Coauthor-Physics]{\includegraphics[width=.5\columnwidth]{./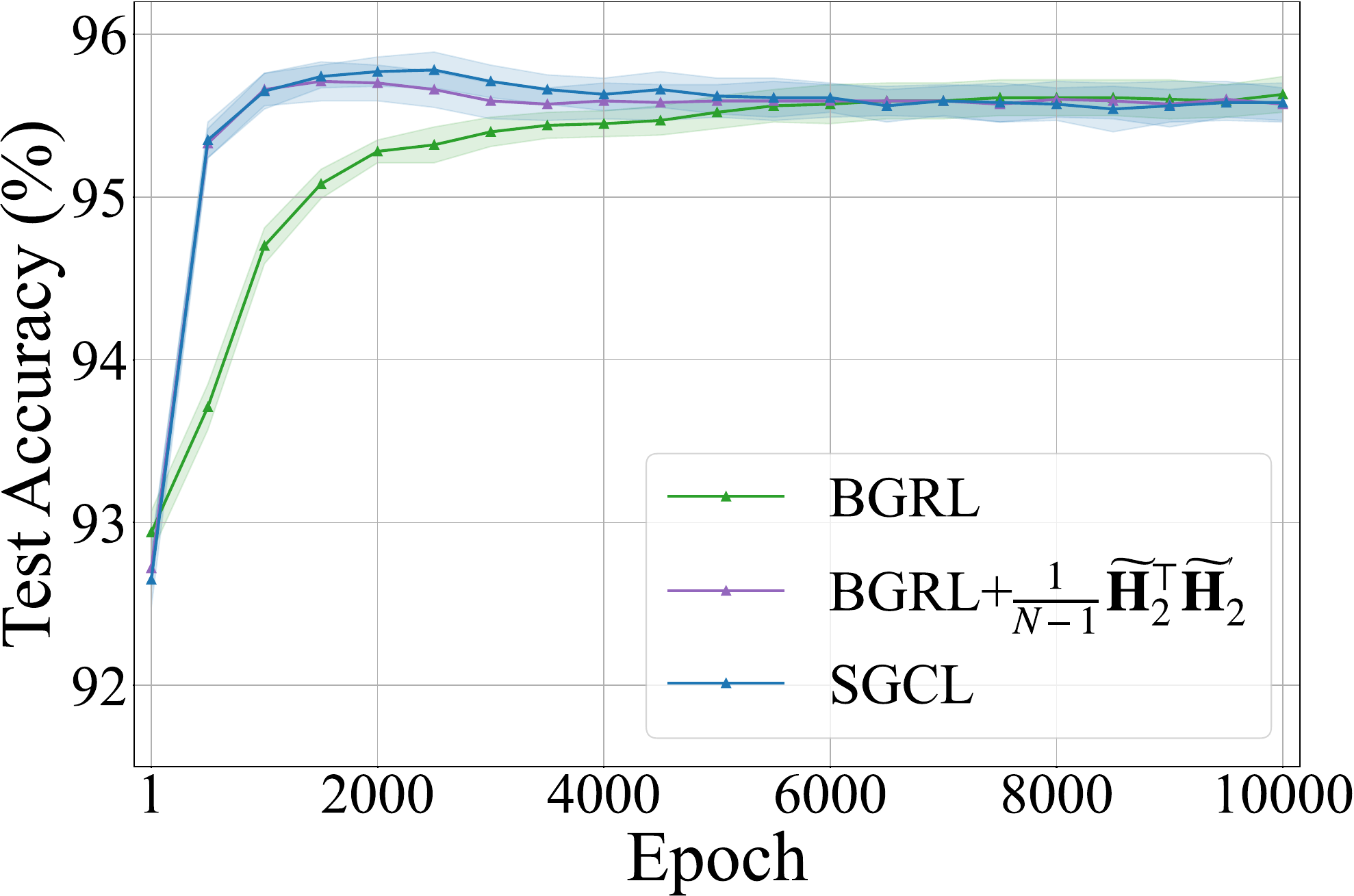}}
        \subfloat[ogbn-Products]{\includegraphics[width=.5\columnwidth]{./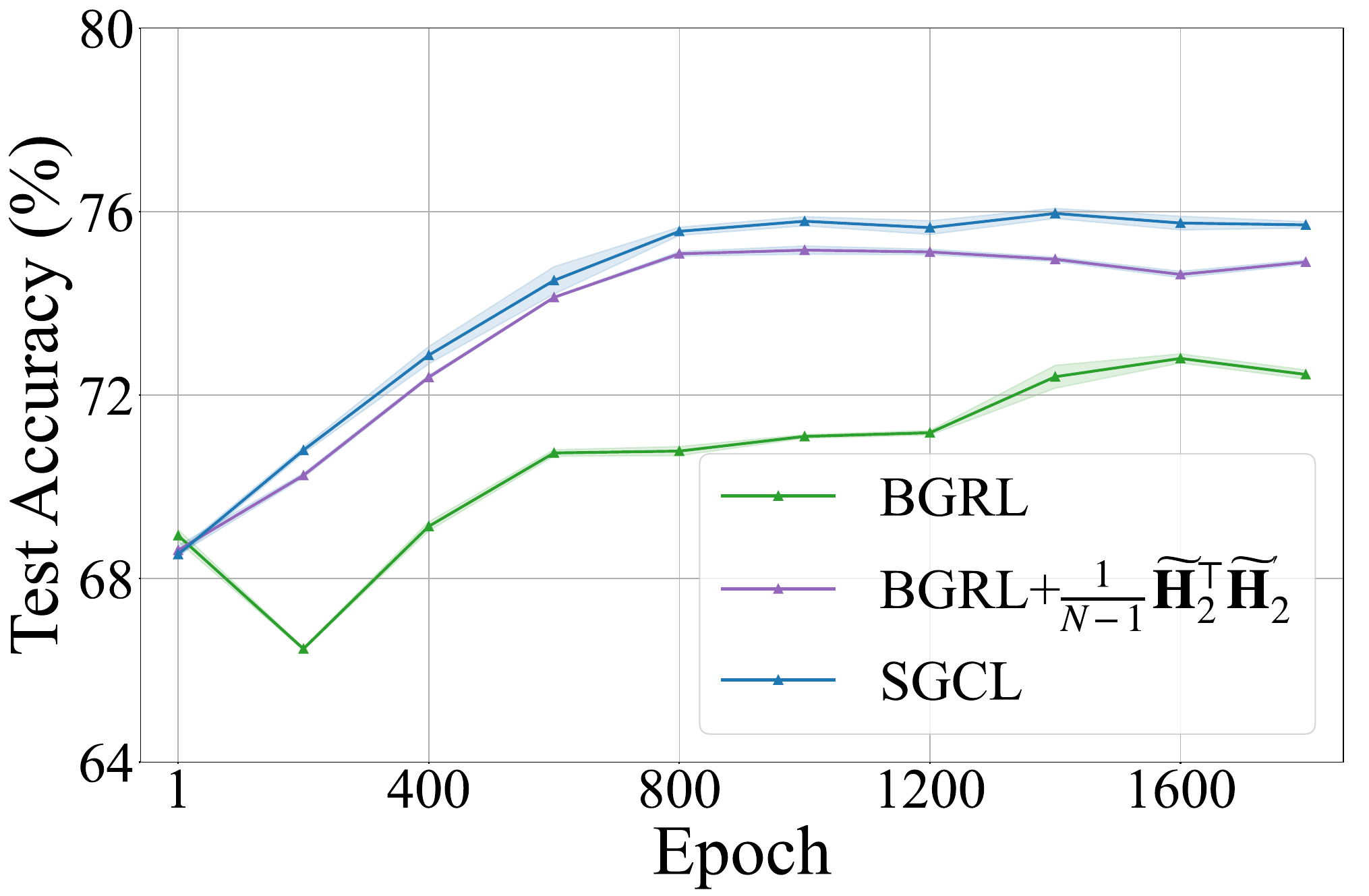}}
    \caption{Test accuracy curve of SGCL and BGRL.}
    \label{img:conv_photo_phy}
\end{figure}

\subsubsection{Convergence Speed}\label{sec:conv_speed}
In Figure \ref{img:conv_photo_phy}, we plot the test accuracy curve of BGRL and SGCL. We can see that SGCL consistently converges significantly faster than BGRL, especially in large-scale dataset ogbn-Products, which confirms the slow convergence issue inherent in BGRL and the superiority of our simplification. 
Furthermore, by replacing the parameterized predictor with an inferential predictor using the target representations $\widetilde{\mathbf{H}}_2$ in BGRL, the convergence speed is significantly improved, leading to a breakthrough in performance on ogbn-Products. 
This also serves as evidence that the slow convergence arises from the challenge of learning how to decorrelate using a parameterized predictor, whereas an inferential predictor analyzed in Section~\ref{sec:thero_motivation} can effectively mitigate this problem.
It is important to emphasize that our experiments can be at least an order of magnitude faster than BGRL with faster execution and convergence speed and fewer hyperparameters.
Note that we only include SGCL and BGRL in the comparison since they are the two strongest methods and can better validate our simplification.
In addition, we keep the encoder architecture setting consistent with BGRL for a fair comparison, i.e., the number of graph convolution layers and model dimensions.

\begin{table}[t]
    \caption{Ablation study for encoder simplification.  $\tau$: the decay rate of EMA, default to 0.99 in BGRL.}
    \label{tab:encoder_ablation_study}
    \scalebox{0.9}{
        \begin{tabular}{cccc}
            \toprule
             & \textbf{WikiCS} & \textbf{Amazon-Computers} & \textbf{Amazon-Photos} \\
            \midrule
            $\tau=0.99$     & 79.80 $\pm$ 0.47    & 90.13 $\pm$ 0.34              & 93.14 $\pm$ 0.28           \\
            $\tau=0.95$     & 79.89 $\pm$ 0.51    & 90.26 $\pm$ 0.27              & 94.41 $\pm$ 0.38           \\
            $\tau=0$        & 79.85 $\pm$ 0.55    & 90.67 $\pm$ 0.28              & 93.48 $\pm$ 0.34           \\
            SGCL       & 79.85 $\pm$ 0.53    & 90.70 $\pm$ 0.30              & 93.46 $\pm$ 0.30 \\
            \bottomrule
        \end{tabular}}
\end{table}

\begin{table}[t]
    \caption{Ablation study for inferential predictor. $\mathbf{I}$: removing predictor. ${\frac{1}{N-1}}{\protect\widebar{\mathbf{H}}^{\top}_{t}}{\protect\widebar{\mathbf{H}}_{t}}$: setting predictor to the covariance matrix of representations from current iteration. MLP: setting predictor to a MLP following BGRL. BGRL+$\frac{1}{N-1}\widetilde{\mathbf{H}}_2^{\top}\widetilde{\mathbf{H}}_2^{'}$: replacing original MLP predictor to $\frac{1}{N-1}\widetilde{\mathbf{H}}_2^{\top}\widetilde{\mathbf{H}}_2$ for BGRL.}
    \label{tab:predictor_ablation_study}
    \scalebox{0.8}{
        \begin{tabular}{cccc}
            \toprule
            & \textbf{WikiCS} & \textbf{Amazon-Computers} & \textbf{Amazon-Photos} \\
            \midrule
                       $\frac{1}{N-1}\widebar{\mathbf{H}}^{'\top}_{t-1}\widebar{\mathbf{H}}^{'}_{t-1}$
                             & 79.85 $\pm$ 0.53    & 90.70 $\pm$ 0.30              & 93.46 $\pm$ 0.30           \\
            $\mathbf{I}$     & 78.52 $\pm$ 0.48    & 84.73 $\pm$ 0.38              & 91.92 $\pm$ 0.42           \\
            MLP              & 79.34 $\pm$ 0.52    & 88.70 $\pm$ 0.31              & 92.98 $\pm$ 0.33           \\
            $\frac{1}{N-1}\widebar{\mathbf{H}}^{\top}_{t}\widebar{\mathbf{H}}_{t}$
                             & 78.64 $\pm$ 0.54    & 90.45 $\pm$ 0.29              & 93.35 $\pm$ 0.31           \\
            BGRL+$\frac{1}{N-1}\widetilde{\mathbf{H}}_2^{\top}\widetilde{\mathbf{H}}_2^{'}$
                             & 79.85 $\pm$ 0.45    & 90.26 $\pm$ 0.33              & 93.12 $\pm$ 0.33           \\
            \bottomrule
        \end{tabular}}
\end{table}

\subsection{Ablation Studies}

To verify the effectiveness of encoder simplification and inferential predictor, we conduct ablation experiments and report corresponding node classification performance.
We keep all the other hyperparameters consistent throughout the experiments. 

\subsubsection{Effect of the encoder simplification.} 
To study the influence of the encoder simplification, we compare the difference between using a single encoder and an additional target encoder with an EMA parameter updating mechanism as BGRL. As Table~\ref{tab:encoder_ablation_study} shows, the performance of using a single encoder in our proposed framework is basically the same as using an additional target encoder with various EMA decay rates, which proves the validity of our simplification. Also, the similar performance under different decay rates is consistent with the non-necessity of EMA as stated in Section~\ref{sec:emp_motivation}.
In fact, we still implicitly take the optimized online encoder from the previous iteration as the target encoder, which is equivalent to $\tau=0$ and they do give the most proximate performance. 

\subsubsection{Effect of the inferential predictor.} \label{sec:infer_predictor_ablation}
Table~\ref{tab:predictor_ablation_study} shows the influence of the inferential predictor. First, removing the inferential predictor will lead to a significant performance drop, which illustrates the effectiveness of the inferential predictor.
Second, resetting the predictor to MLP as BGRL gives similar results as our method, which proves the correctness of our inference. 
Moreover, we observe that setting the predictor as the covariance matrix of $\widebar{\mathbf{H}}_{t}$ gives slightly worse performance than $\widebar{\mathbf{H}}^{'}_{t-1}$, this can be explained that $\widebar{\mathbf{H}}^{'}_{t-1}$ is obtained from the optimized parameters for the corresponding training input $(\mathbf{A}_{t-1}, \mathbf{X}_{t-1})$. Thus, $\widebar{\mathbf{H}}^{'}_{t-1}$ is more stable and accurate to serve as the target to help the model learn better.
Finally, substituting the original MLP predictor in BGRL with the inferred predictor yielded similar performance compared to vanilla BGRL, thereby confirming the validity of our theoretical analysis in Section~\ref{sec:thero_motivation} and the effectiveness of the inferential predictor.

\begin{figure}[t]
    \centering
    \small
    \includegraphics[width=1\linewidth]{./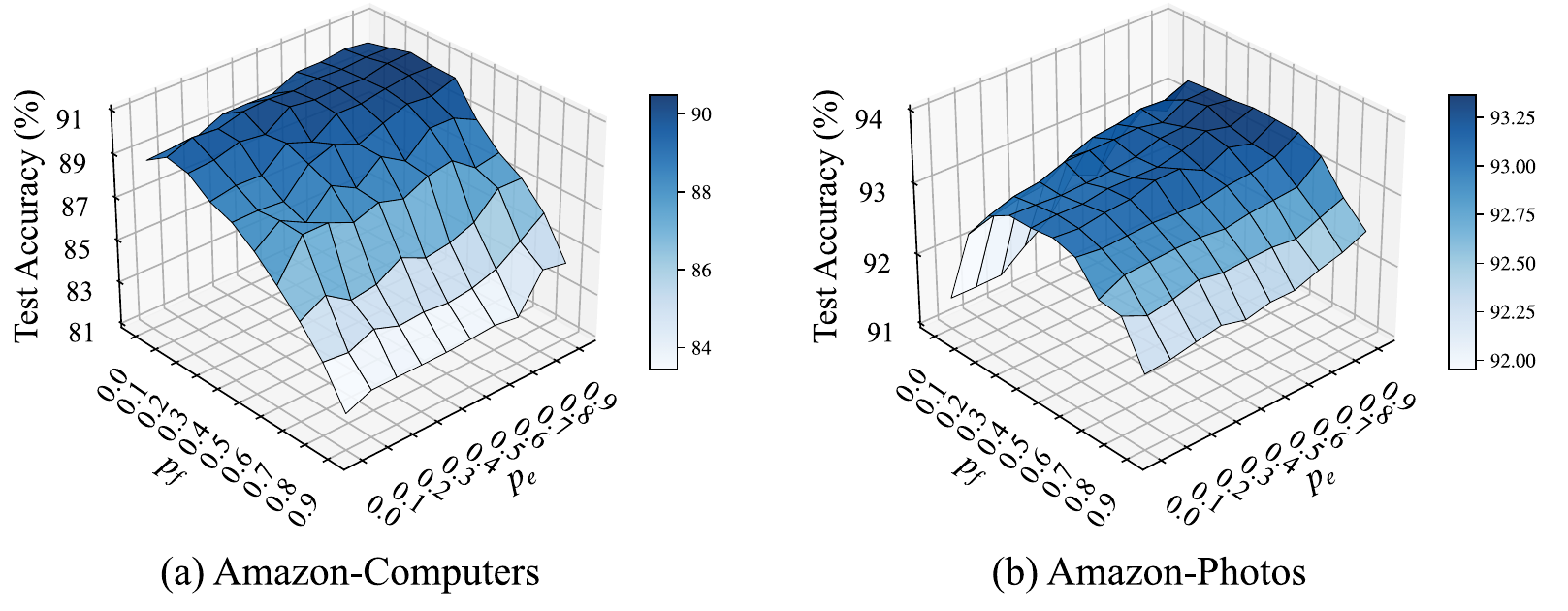}
    \caption{Effect of $p_{e}$ and $p_{f}$.}
    \label{img:drop_p_abla}
\end{figure}

\subsection{Hyperparameter Analysis}
We investigate the impact of graph augmentation hyperparameters in SGCL, i.e., edge drop ratio $p_{e}$ and feature drop ratio $p_{f}$. We keep the other parameters the same while only changing $p_{e}$ and $p_{f}$. We conduct experiments by varying the values of $p_{e}$ and $p_{f}$ from 0 to 0.9 and report the corresponding test accuracy in Figure \ref{img:drop_p_abla}. From the figure, we can observe that the classification accuracy is generally stable. That is, as long as the augmentation parameters are in a proper range, SGCL could consistently achieve competitive performance.
However, applying an appropriate graph augmentation can effectively improve the model performance, which is a further validation of our analysis.
In addition, we find that SGCL benefits from a larger edge drop ratio and we attribute the fact that we do not apply distinct augmentation functions like previous GCL methods and therefore need a greater degree of perturbation to produce more discriminating augmented views.

\section{Conclusion}
In this paper, we empirically show that the graph augmentations and the predictor are crucial to the success of BGRL and give our insights on their role in the framework. 
We theoretically demonstrate that the predictor could be computed from node representations. Through our empirical and theoretical analysis, we have uncovered potential redundancies in BGRL and aim to simplify the framework accordingly. We propose a simple yet efficient negative-sample-free GCL framework SGCL, which only contains a graph augmentation, a graph encoder and an inferential predictor without any other parameters. 
Extensive experiments on eight benchmarks demonstrate the effectiveness of SGCL, which achieves competitive performance with BGRL and state-of-the-arts while effectively reducing the number of parameters and memory consumption and accelerating the execution and convergence speed.

\section{ACKNOWLEDGEMENTS}
The research is supported by the National Key R\&D Program of China under grant No. 2022YFF0902500, the Guangdong Basic and Applied Basic Research Foundation, China (No. 2023A1515011050).

\bibliographystyle{ACM-Reference-Format}
\bibliography{main}


\begin{thebibliography}{40}


\ifx \showCODEN    \undefined \def \showCODEN     #1{\unskip}     \fi
\ifx \showDOI      \undefined \def \showDOI       #1{#1}\fi
\ifx \showISBNx    \undefined \def \showISBNx     #1{\unskip}     \fi
\ifx \showISBNxiii \undefined \def \showISBNxiii  #1{\unskip}     \fi
\ifx \showISSN     \undefined \def \showISSN      #1{\unskip}     \fi
\ifx \showLCCN     \undefined \def \showLCCN      #1{\unskip}     \fi
\ifx \shownote     \undefined \def \shownote      #1{#1}          \fi
\ifx \showarticletitle \undefined \def \showarticletitle #1{#1}   \fi
\ifx \showURL      \undefined \def \showURL       {\relax}        \fi
\providecommand\bibfield[2]{#2}
\providecommand\bibinfo[2]{#2}
\providecommand\natexlab[1]{#1}
\providecommand\showeprint[2][]{arXiv:#2}

\bibitem[Ba et~al\mbox{.}(2016)]%
        {ba2016layer}
\bibfield{author}{\bibinfo{person}{Jimmy~Lei Ba}, \bibinfo{person}{Jamie~Ryan Kiros}, {and} \bibinfo{person}{Geoffrey~E Hinton}.} \bibinfo{year}{2016}\natexlab{}.
\newblock \showarticletitle{Layer normalization}.
\newblock \bibinfo{journal}{\emph{arXiv preprint arXiv:1607.06450}} (\bibinfo{year}{2016}).
\newblock


\bibitem[Chen et~al\mbox{.}(2020)]%
        {SimCLR}
\bibfield{author}{\bibinfo{person}{Ting Chen}, \bibinfo{person}{Simon Kornblith}, \bibinfo{person}{Mohammad Norouzi}, {and} \bibinfo{person}{Geoffrey~E. Hinton}.} \bibinfo{year}{2020}\natexlab{}.
\newblock \showarticletitle{A Simple Framework for Contrastive Learning of Visual Representations}. In \bibinfo{booktitle}{\emph{Proceedings of the 37th International Conference on Machine Learning, {ICML} 2020, 13-18 July 2020, Virtual Event}} \emph{(\bibinfo{series}{Proceedings of Machine Learning Research}, Vol.~\bibinfo{volume}{119})}. \bibinfo{publisher}{{PMLR}}, \bibinfo{pages}{1597--1607}.
\newblock
\urldef\tempurl%
\url{http://proceedings.mlr.press/v119/chen20j.html}
\showURL{%
\tempurl}


\bibitem[Chen and He(2021)]%
        {Siamese}
\bibfield{author}{\bibinfo{person}{Xinlei Chen} {and} \bibinfo{person}{Kaiming He}.} \bibinfo{year}{2021}\natexlab{}.
\newblock \showarticletitle{Exploring simple siamese representation learning}. In \bibinfo{booktitle}{\emph{Proceedings of the IEEE/CVF Conference on Computer Vision and Pattern Recognition}}. \bibinfo{pages}{15750--15758}.
\newblock


\bibitem[Fey and Lenssen(2019)]%
        {pyg}
\bibfield{author}{\bibinfo{person}{Matthias Fey} {and} \bibinfo{person}{Jan~E. Lenssen}.} \bibinfo{year}{2019}\natexlab{}.
\newblock \showarticletitle{Fast Graph Representation Learning with {PyTorch Geometric}}. In \bibinfo{booktitle}{\emph{ICLR Workshop on Representation Learning on Graphs and Manifolds}}.
\newblock


\bibitem[Glorot and Bengio(2010)]%
        {Glorot}
\bibfield{author}{\bibinfo{person}{Xavier Glorot} {and} \bibinfo{person}{Yoshua Bengio}.} \bibinfo{year}{2010}\natexlab{}.
\newblock \showarticletitle{Understanding the difficulty of training deep feedforward neural networks}. In \bibinfo{booktitle}{\emph{Proceedings of the Thirteenth International Conference on Artificial Intelligence and Statistics, {AISTATS} 2010, Chia Laguna Resort, Sardinia, Italy, May 13-15, 2010}} \emph{(\bibinfo{series}{{JMLR} Proceedings}, Vol.~\bibinfo{volume}{9})}, \bibfield{editor}{\bibinfo{person}{Yee~Whye Teh} {and} \bibinfo{person}{D.~Mike Titterington}} (Eds.). \bibinfo{publisher}{JMLR.org}, \bibinfo{pages}{249--256}.
\newblock
\urldef\tempurl%
\url{http://proceedings.mlr.press/v9/glorot10a.html}
\showURL{%
\tempurl}


\bibitem[Grill et~al\mbox{.}(2020)]%
        {BYOL}
\bibfield{author}{\bibinfo{person}{Jean-Bastien Grill}, \bibinfo{person}{Florian Strub}, \bibinfo{person}{Florent Altch{\'e}}, \bibinfo{person}{Corentin Tallec}, \bibinfo{person}{Pierre Richemond}, \bibinfo{person}{Elena Buchatskaya}, \bibinfo{person}{Carl Doersch}, \bibinfo{person}{Bernardo Avila~Pires}, \bibinfo{person}{Zhaohan Guo}, \bibinfo{person}{Mohammad Gheshlaghi~Azar}, {et~al\mbox{.}}} \bibinfo{year}{2020}\natexlab{}.
\newblock \showarticletitle{Bootstrap your own latent-a new approach to self-supervised learning}.
\newblock \bibinfo{journal}{\emph{Advances in neural information processing systems}}  \bibinfo{volume}{33} (\bibinfo{year}{2020}), \bibinfo{pages}{21271--21284}.
\newblock


\bibitem[Hamilton et~al\mbox{.}(2017)]%
        {GraphSAGE}
\bibfield{author}{\bibinfo{person}{Will Hamilton}, \bibinfo{person}{Zhitao Ying}, {and} \bibinfo{person}{Jure Leskovec}.} \bibinfo{year}{2017}\natexlab{}.
\newblock \showarticletitle{Inductive representation learning on large graphs}.
\newblock \bibinfo{journal}{\emph{Advances in neural information processing systems}}  \bibinfo{volume}{30} (\bibinfo{year}{2017}).
\newblock


\bibitem[Hassani and Khasahmadi(2020)]%
        {MVGRL}
\bibfield{author}{\bibinfo{person}{Kaveh Hassani} {and} \bibinfo{person}{Amir~Hosein Khasahmadi}.} \bibinfo{year}{2020}\natexlab{}.
\newblock \showarticletitle{Contrastive multi-view representation learning on graphs}. In \bibinfo{booktitle}{\emph{International Conference on Machine Learning}}. PMLR, \bibinfo{pages}{4116--4126}.
\newblock


\bibitem[He et~al\mbox{.}(2020b)]%
        {MoCo}
\bibfield{author}{\bibinfo{person}{Kaiming He}, \bibinfo{person}{Haoqi Fan}, \bibinfo{person}{Yuxin Wu}, \bibinfo{person}{Saining Xie}, {and} \bibinfo{person}{Ross~B. Girshick}.} \bibinfo{year}{2020}\natexlab{b}.
\newblock \showarticletitle{Momentum Contrast for Unsupervised Visual Representation Learning}. In \bibinfo{booktitle}{\emph{2020 {IEEE/CVF} Conference on Computer Vision and Pattern Recognition, {CVPR} 2020, Seattle, WA, USA, June 13-19, 2020}}. \bibinfo{publisher}{Computer Vision Foundation / {IEEE}}, \bibinfo{pages}{9726--9735}.
\newblock
\urldef\tempurl%
\url{https://doi.org/10.1109/CVPR42600.2020.00975}
\showDOI{\tempurl}


\bibitem[He et~al\mbox{.}(2020a)]%
        {LightGCN}
\bibfield{author}{\bibinfo{person}{Xiangnan He}, \bibinfo{person}{Kuan Deng}, \bibinfo{person}{Xiang Wang}, \bibinfo{person}{Yan Li}, \bibinfo{person}{Yong{-}Dong Zhang}, {and} \bibinfo{person}{Meng Wang}.} \bibinfo{year}{2020}\natexlab{a}.
\newblock \showarticletitle{LightGCN: Simplifying and Powering Graph Convolution Network for Recommendation}. In \bibinfo{booktitle}{\emph{Proceedings of the 43rd International {ACM} {SIGIR} conference on research and development in Information Retrieval, {SIGIR} 2020, Virtual Event, China, July 25-30, 2020}}, \bibfield{editor}{\bibinfo{person}{Jimmy~X. Huang}, \bibinfo{person}{Yi~Chang}, \bibinfo{person}{Xueqi Cheng}, \bibinfo{person}{Jaap Kamps}, \bibinfo{person}{Vanessa Murdock}, \bibinfo{person}{Ji{-}Rong Wen}, {and} \bibinfo{person}{Yiqun Liu}} (Eds.). \bibinfo{publisher}{{ACM}}, \bibinfo{pages}{639--648}.
\newblock
\urldef\tempurl%
\url{https://doi.org/10.1145/3397271.3401063}
\showDOI{\tempurl}


\bibitem[Hotelling(1992)]%
        {CCA}
\bibfield{author}{\bibinfo{person}{Harold Hotelling}.} \bibinfo{year}{1992}\natexlab{}.
\newblock \showarticletitle{Relations between two sets of variates}.
\newblock In \bibinfo{booktitle}{\emph{Breakthroughs in statistics}}. \bibinfo{publisher}{Springer}, \bibinfo{pages}{162--190}.
\newblock


\bibitem[Hou et~al\mbox{.}(2022)]%
        {GraphMAE}
\bibfield{author}{\bibinfo{person}{Zhenyu Hou}, \bibinfo{person}{Xiao Liu}, \bibinfo{person}{Yukuo Cen}, \bibinfo{person}{Yuxiao Dong}, \bibinfo{person}{Hongxia Yang}, \bibinfo{person}{Chunjie Wang}, {and} \bibinfo{person}{Jie Tang}.} \bibinfo{year}{2022}\natexlab{}.
\newblock \showarticletitle{GraphMAE: Self-Supervised Masked Graph Autoencoders}. In \bibinfo{booktitle}{\emph{{KDD} '22: The 28th {ACM} {SIGKDD} Conference on Knowledge Discovery and Data Mining, Washington, DC, USA, August 14 - 18, 2022}}, \bibfield{editor}{\bibinfo{person}{Aidong Zhang} {and} \bibinfo{person}{Huzefa Rangwala}} (Eds.). \bibinfo{publisher}{{ACM}}, \bibinfo{pages}{594--604}.
\newblock
\urldef\tempurl%
\url{https://doi.org/10.1145/3534678.3539321}
\showDOI{\tempurl}


\bibitem[Hu et~al\mbox{.}(2020)]%
        {hu2020ogb}
\bibfield{author}{\bibinfo{person}{Weihua Hu}, \bibinfo{person}{Matthias Fey}, \bibinfo{person}{Marinka Zitnik}, \bibinfo{person}{Yuxiao Dong}, \bibinfo{person}{Hongyu Ren}, \bibinfo{person}{Bowen Liu}, \bibinfo{person}{Michele Catasta}, {and} \bibinfo{person}{Jure Leskovec}.} \bibinfo{year}{2020}\natexlab{}.
\newblock \showarticletitle{Open Graph Benchmark: Datasets for Machine Learning on Graphs}.
\newblock \bibinfo{journal}{\emph{arXiv preprint arXiv:2005.00687}} (\bibinfo{year}{2020}).
\newblock


\bibitem[Ioffe and Szegedy(2015)]%
        {ioffe2015batch}
\bibfield{author}{\bibinfo{person}{Sergey Ioffe} {and} \bibinfo{person}{Christian Szegedy}.} \bibinfo{year}{2015}\natexlab{}.
\newblock \showarticletitle{Batch normalization: Accelerating deep network training by reducing internal covariate shift}. In \bibinfo{booktitle}{\emph{International conference on machine learning}}. PMLR, \bibinfo{pages}{448--456}.
\newblock


\bibitem[Kingma and Ba(2014)]%
        {Adam}
\bibfield{author}{\bibinfo{person}{Diederik~P Kingma} {and} \bibinfo{person}{Jimmy Ba}.} \bibinfo{year}{2014}\natexlab{}.
\newblock \showarticletitle{Adam: A method for stochastic optimization}.
\newblock \bibinfo{journal}{\emph{arXiv preprint arXiv:1412.6980}} (\bibinfo{year}{2014}).
\newblock


\bibitem[Kipf and Welling(2016)]%
        {GCN}
\bibfield{author}{\bibinfo{person}{Thomas~N Kipf} {and} \bibinfo{person}{Max Welling}.} \bibinfo{year}{2016}\natexlab{}.
\newblock \showarticletitle{Semi-supervised classification with graph convolutional networks}.
\newblock \bibinfo{journal}{\emph{arXiv preprint arXiv:1609.02907}} (\bibinfo{year}{2016}).
\newblock


\bibitem[Lampinen and Ganguli(2019)]%
        {TS_Dyna}
\bibfield{author}{\bibinfo{person}{Andrew~K. Lampinen} {and} \bibinfo{person}{Surya Ganguli}.} \bibinfo{year}{2019}\natexlab{}.
\newblock \showarticletitle{An analytic theory of generalization dynamics and transfer learning in deep linear networks}. In \bibinfo{booktitle}{\emph{7th International Conference on Learning Representations, {ICLR} 2019, New Orleans, LA, USA, May 6-9, 2019}}. \bibinfo{publisher}{OpenReview.net}.
\newblock
\urldef\tempurl%
\url{https://openreview.net/forum?id=ryfMLoCqtQ}
\showURL{%
\tempurl}


\bibitem[Lee et~al\mbox{.}(2022)]%
        {AFGRL}
\bibfield{author}{\bibinfo{person}{Namkyeong Lee}, \bibinfo{person}{Junseok Lee}, {and} \bibinfo{person}{Chanyoung Park}.} \bibinfo{year}{2022}\natexlab{}.
\newblock \showarticletitle{Augmentation-free self-supervised learning on graphs}. In \bibinfo{booktitle}{\emph{Proceedings of the AAAI Conference on Artificial Intelligence}}, Vol.~\bibinfo{volume}{36}. \bibinfo{pages}{7372--7380}.
\newblock


\bibitem[Liu et~al\mbox{.}(2022)]%
        {gssl_survey}
\bibfield{author}{\bibinfo{person}{Yixin Liu}, \bibinfo{person}{Ming Jin}, \bibinfo{person}{Shirui Pan}, \bibinfo{person}{Chuan Zhou}, \bibinfo{person}{Yu Zheng}, \bibinfo{person}{Feng Xia}, {and} \bibinfo{person}{Philip Yu}.} \bibinfo{year}{2022}\natexlab{}.
\newblock \showarticletitle{Graph self-supervised learning: A survey}.
\newblock \bibinfo{journal}{\emph{IEEE Transactions on Knowledge and Data Engineering}} (\bibinfo{year}{2022}).
\newblock


\bibitem[Loshchilov and Hutter(2017)]%
        {loshchilov2017decoupled}
\bibfield{author}{\bibinfo{person}{Ilya Loshchilov} {and} \bibinfo{person}{Frank Hutter}.} \bibinfo{year}{2017}\natexlab{}.
\newblock \showarticletitle{Decoupled weight decay regularization}.
\newblock \bibinfo{journal}{\emph{arXiv preprint arXiv:1711.05101}} (\bibinfo{year}{2017}).
\newblock


\bibitem[Mernyei and Cangea(2020)]%
        {mernyei2020wiki}
\bibfield{author}{\bibinfo{person}{P{\'e}ter Mernyei} {and} \bibinfo{person}{C{\u{a}}t{\u{a}}lina Cangea}.} \bibinfo{year}{2020}\natexlab{}.
\newblock \showarticletitle{Wiki-cs: A wikipedia-based benchmark for graph neural networks}.
\newblock \bibinfo{journal}{\emph{arXiv preprint arXiv:2007.02901}} (\bibinfo{year}{2020}).
\newblock


\bibitem[Shchur et~al\mbox{.}(2018)]%
        {shchur2018pitfalls}
\bibfield{author}{\bibinfo{person}{Oleksandr Shchur}, \bibinfo{person}{Maximilian Mumme}, \bibinfo{person}{Aleksandar Bojchevski}, {and} \bibinfo{person}{Stephan G{\"u}nnemann}.} \bibinfo{year}{2018}\natexlab{}.
\newblock \showarticletitle{Pitfalls of Graph Neural Network Evaluation}.
\newblock \bibinfo{journal}{\emph{Relational Representation Learning Workshop, NeurIPS}} (\bibinfo{year}{2018}).
\newblock


\bibitem[Thakoor et~al\mbox{.}(2022)]%
        {BGRL}
\bibfield{author}{\bibinfo{person}{Shantanu Thakoor}, \bibinfo{person}{Corentin Tallec}, \bibinfo{person}{Mohammad~Gheshlaghi Azar}, \bibinfo{person}{Mehdi Azabou}, \bibinfo{person}{Eva~L Dyer}, \bibinfo{person}{Remi Munos}, \bibinfo{person}{Petar Veli{\v{c}}kovi{\'c}}, {and} \bibinfo{person}{Michal Valko}.} \bibinfo{year}{2022}\natexlab{}.
\newblock \showarticletitle{Large-Scale Representation Learning on Graphs via Bootstrapping}. In \bibinfo{booktitle}{\emph{International Conference on Learning Representations}}.
\newblock
\urldef\tempurl%
\url{https://openreview.net/forum?id=0UXT6PpRpW}
\showURL{%
\tempurl}


\bibitem[Veli{\v{c}}kovi{\'c} et~al\mbox{.}(2017)]%
        {GAT}
\bibfield{author}{\bibinfo{person}{Petar Veli{\v{c}}kovi{\'c}}, \bibinfo{person}{Guillem Cucurull}, \bibinfo{person}{Arantxa Casanova}, \bibinfo{person}{Adriana Romero}, \bibinfo{person}{Pietro Lio}, {and} \bibinfo{person}{Yoshua Bengio}.} \bibinfo{year}{2017}\natexlab{}.
\newblock \showarticletitle{Graph attention networks}.
\newblock \bibinfo{journal}{\emph{arXiv preprint arXiv:1710.10903}} (\bibinfo{year}{2017}).
\newblock


\bibitem[Velickovic et~al\mbox{.}(2019)]%
        {DGI}
\bibfield{author}{\bibinfo{person}{Petar Velickovic}, \bibinfo{person}{William Fedus}, \bibinfo{person}{William~L Hamilton}, \bibinfo{person}{Pietro Li{\`o}}, \bibinfo{person}{Yoshua Bengio}, {and} \bibinfo{person}{R~Devon Hjelm}.} \bibinfo{year}{2019}\natexlab{}.
\newblock \showarticletitle{Deep Graph Infomax.}
\newblock \bibinfo{journal}{\emph{ICLR (Poster)}} \bibinfo{volume}{2}, \bibinfo{number}{3} (\bibinfo{year}{2019}), \bibinfo{pages}{4}.
\newblock


\bibitem[Wang et~al\mbox{.}(2019)]%
        {dgl}
\bibfield{author}{\bibinfo{person}{Minjie Wang}, \bibinfo{person}{Da Zheng}, \bibinfo{person}{Zihao Ye}, \bibinfo{person}{Quan Gan}, \bibinfo{person}{Mufei Li}, \bibinfo{person}{Xiang Song}, \bibinfo{person}{Jinjing Zhou}, \bibinfo{person}{Chao Ma}, \bibinfo{person}{Lingfan Yu}, \bibinfo{person}{Yu Gai}, \bibinfo{person}{Tianjun Xiao}, \bibinfo{person}{Tong He}, \bibinfo{person}{George Karypis}, \bibinfo{person}{Jinyang Li}, {and} \bibinfo{person}{Zheng Zhang}.} \bibinfo{year}{2019}\natexlab{}.
\newblock \showarticletitle{Deep Graph Library: A Graph-Centric, Highly-Performant Package for Graph Neural Networks}.
\newblock \bibinfo{journal}{\emph{arXiv preprint arXiv:1909.01315}} (\bibinfo{year}{2019}).
\newblock


\bibitem[Wang and Isola(2020)]%
        {algin_uniform}
\bibfield{author}{\bibinfo{person}{Tongzhou Wang} {and} \bibinfo{person}{Phillip Isola}.} \bibinfo{year}{2020}\natexlab{}.
\newblock \showarticletitle{Understanding contrastive representation learning through alignment and uniformity on the hypersphere}. In \bibinfo{booktitle}{\emph{International Conference on Machine Learning}}. PMLR, \bibinfo{pages}{9929--9939}.
\newblock


\bibitem[Wang et~al\mbox{.}(2022)]%
        {MoICLR}
\bibfield{author}{\bibinfo{person}{Yuyang Wang}, \bibinfo{person}{Jianren Wang}, \bibinfo{person}{Zhonglin Cao}, {and} \bibinfo{person}{Amir~Barati Farimani}.} \bibinfo{year}{2022}\natexlab{}.
\newblock \showarticletitle{Molecular contrastive learning of representations via graph neural networks}.
\newblock \bibinfo{journal}{\emph{Nat. Mach. Intell.}} \bibinfo{volume}{4}, \bibinfo{number}{3} (\bibinfo{year}{2022}), \bibinfo{pages}{279--287}.
\newblock
\urldef\tempurl%
\url{https://doi.org/10.1038/s42256-022-00447-x}
\showDOI{\tempurl}


\bibitem[Wu et~al\mbox{.}(2019)]%
        {SGC}
\bibfield{author}{\bibinfo{person}{Felix Wu}, \bibinfo{person}{Amauri Souza}, \bibinfo{person}{Tianyi Zhang}, \bibinfo{person}{Christopher Fifty}, \bibinfo{person}{Tao Yu}, {and} \bibinfo{person}{Kilian Weinberger}.} \bibinfo{year}{2019}\natexlab{}.
\newblock \showarticletitle{Simplifying graph convolutional networks}. In \bibinfo{booktitle}{\emph{International conference on machine learning}}. PMLR, \bibinfo{pages}{6861--6871}.
\newblock


\bibitem[Wu et~al\mbox{.}(2021b)]%
        {SGL}
\bibfield{author}{\bibinfo{person}{Jiancan Wu}, \bibinfo{person}{Xiang Wang}, \bibinfo{person}{Fuli Feng}, \bibinfo{person}{Xiangnan He}, \bibinfo{person}{Liang Chen}, \bibinfo{person}{Jianxun Lian}, {and} \bibinfo{person}{Xing Xie}.} \bibinfo{year}{2021}\natexlab{b}.
\newblock \showarticletitle{Self-supervised Graph Learning for Recommendation}. In \bibinfo{booktitle}{\emph{{SIGIR} '21: The 44th International {ACM} {SIGIR} Conference on Research and Development in Information Retrieval, Virtual Event, Canada, July 11-15, 2021}}, \bibfield{editor}{\bibinfo{person}{Fernando Diaz}, \bibinfo{person}{Chirag Shah}, \bibinfo{person}{Torsten Suel}, \bibinfo{person}{Pablo Castells}, \bibinfo{person}{Rosie Jones}, {and} \bibinfo{person}{Tetsuya Sakai}} (Eds.). \bibinfo{publisher}{{ACM}}, \bibinfo{pages}{726--735}.
\newblock
\urldef\tempurl%
\url{https://doi.org/10.1145/3404835.3462862}
\showDOI{\tempurl}


\bibitem[Wu et~al\mbox{.}(2021a)]%
        {GSSL_P_C_G}
\bibfield{author}{\bibinfo{person}{Lirong Wu}, \bibinfo{person}{Haitao Lin}, \bibinfo{person}{Zhangyang Gao}, \bibinfo{person}{Cheng Tan}, {and} \bibinfo{person}{Stan~Z. Li}.} \bibinfo{year}{2021}\natexlab{a}.
\newblock \showarticletitle{Self-supervised on Graphs: Contrastive, Generative, or Predictive}.
\newblock \bibinfo{journal}{\emph{CoRR}}  \bibinfo{volume}{abs/2105.07342} (\bibinfo{year}{2021}).
\newblock
\showeprint[arXiv]{2105.07342}
\urldef\tempurl%
\url{https://arxiv.org/abs/2105.07342}
\showURL{%
\tempurl}


\bibitem[Xia et~al\mbox{.}(2021)]%
        {SSL_Hyperrec}
\bibfield{author}{\bibinfo{person}{Xin Xia}, \bibinfo{person}{Hongzhi Yin}, \bibinfo{person}{Junliang Yu}, \bibinfo{person}{Qinyong Wang}, \bibinfo{person}{Lizhen Cui}, {and} \bibinfo{person}{Xiangliang Zhang}.} \bibinfo{year}{2021}\natexlab{}.
\newblock \showarticletitle{Self-Supervised Hypergraph Convolutional Networks for Session-based Recommendation}. In \bibinfo{booktitle}{\emph{Thirty-Fifth {AAAI} Conference on Artificial Intelligence, {AAAI} 2021, Thirty-Third Conference on Innovative Applications of Artificial Intelligence, {IAAI} 2021, The Eleventh Symposium on Educational Advances in Artificial Intelligence, {EAAI} 2021, Virtual Event, February 2-9, 2021}}. \bibinfo{publisher}{{AAAI} Press}, \bibinfo{pages}{4503--4511}.
\newblock
\urldef\tempurl%
\url{https://ojs.aaai.org/index.php/AAAI/article/view/16578}
\showURL{%
\tempurl}


\bibitem[Xie et~al\mbox{.}(2022)]%
        {LaGraph}
\bibfield{author}{\bibinfo{person}{Yaochen Xie}, \bibinfo{person}{Zhao Xu}, {and} \bibinfo{person}{Shuiwang Ji}.} \bibinfo{year}{2022}\natexlab{}.
\newblock \showarticletitle{Self-Supervised Representation Learning via Latent Graph Prediction}. In \bibinfo{booktitle}{\emph{International Conference on Machine Learning, {ICML} 2022, 17-23 July 2022, Baltimore, Maryland, {USA}}} \emph{(\bibinfo{series}{Proceedings of Machine Learning Research}, Vol.~\bibinfo{volume}{162})}, \bibfield{editor}{\bibinfo{person}{Kamalika Chaudhuri}, \bibinfo{person}{Stefanie Jegelka}, \bibinfo{person}{Le~Song}, \bibinfo{person}{Csaba Szepesv{\'{a}}ri}, \bibinfo{person}{Gang Niu}, {and} \bibinfo{person}{Sivan Sabato}} (Eds.). \bibinfo{publisher}{{PMLR}}, \bibinfo{pages}{24460--24477}.
\newblock
\urldef\tempurl%
\url{https://proceedings.mlr.press/v162/xie22e.html}
\showURL{%
\tempurl}


\bibitem[Zbontar et~al\mbox{.}(2021)]%
        {BarlowTinws}
\bibfield{author}{\bibinfo{person}{Jure Zbontar}, \bibinfo{person}{Li Jing}, \bibinfo{person}{Ishan Misra}, \bibinfo{person}{Yann LeCun}, {and} \bibinfo{person}{St{\'e}phane Deny}.} \bibinfo{year}{2021}\natexlab{}.
\newblock \showarticletitle{Barlow twins: Self-supervised learning via redundancy reduction}. In \bibinfo{booktitle}{\emph{International Conference on Machine Learning}}. PMLR, \bibinfo{pages}{12310--12320}.
\newblock


\bibitem[Zhang et~al\mbox{.}(2021b)]%
        {CCA-SSG}
\bibfield{author}{\bibinfo{person}{Hengrui Zhang}, \bibinfo{person}{Qitian Wu}, \bibinfo{person}{Junchi Yan}, \bibinfo{person}{David Wipf}, {and} \bibinfo{person}{Philip~S Yu}.} \bibinfo{year}{2021}\natexlab{b}.
\newblock \showarticletitle{From canonical correlation analysis to self-supervised graph neural networks}.
\newblock \bibinfo{journal}{\emph{Advances in Neural Information Processing Systems}}  \bibinfo{volume}{34} (\bibinfo{year}{2021}), \bibinfo{pages}{76--89}.
\newblock


\bibitem[Zhang et~al\mbox{.}(2022a)]%
        {DBLP:conf/iclr/ZhangLSS22}
\bibfield{author}{\bibinfo{person}{Shichang Zhang}, \bibinfo{person}{Yozen Liu}, \bibinfo{person}{Yizhou Sun}, {and} \bibinfo{person}{Neil Shah}.} \bibinfo{year}{2022}\natexlab{a}.
\newblock \showarticletitle{Graph-less Neural Networks: Teaching Old MLPs New Tricks Via Distillation}. In \bibinfo{booktitle}{\emph{{ICLR}}}. \bibinfo{publisher}{OpenReview.net}.
\newblock


\bibitem[Zhang et~al\mbox{.}(2022b)]%
        {COSTA}
\bibfield{author}{\bibinfo{person}{Yifei Zhang}, \bibinfo{person}{Hao Zhu}, \bibinfo{person}{Zixing Song}, \bibinfo{person}{Piotr Koniusz}, {and} \bibinfo{person}{Irwin King}.} \bibinfo{year}{2022}\natexlab{b}.
\newblock \showarticletitle{COSTA: Covariance-Preserving Feature Augmentation for Graph Contrastive Learning}. In \bibinfo{booktitle}{\emph{Proceedings of the 28th ACM SIGKDD Conference on Knowledge Discovery and Data Mining}}. \bibinfo{pages}{2524--2534}.
\newblock


\bibitem[Zhang et~al\mbox{.}(2021a)]%
        {Motif}
\bibfield{author}{\bibinfo{person}{Zaixi Zhang}, \bibinfo{person}{Qi Liu}, \bibinfo{person}{Hao Wang}, \bibinfo{person}{Chengqiang Lu}, {and} \bibinfo{person}{Chee{-}Kong Lee}.} \bibinfo{year}{2021}\natexlab{a}.
\newblock \showarticletitle{Motif-based Graph Self-Supervised Learning for Molecular Property Prediction}. In \bibinfo{booktitle}{\emph{Advances in Neural Information Processing Systems 34: Annual Conference on Neural Information Processing Systems 2021, NeurIPS 2021, December 6-14, 2021, virtual}}, \bibfield{editor}{\bibinfo{person}{Marc'Aurelio Ranzato}, \bibinfo{person}{Alina Beygelzimer}, \bibinfo{person}{Yann~N. Dauphin}, \bibinfo{person}{Percy Liang}, {and} \bibinfo{person}{Jennifer~Wortman Vaughan}} (Eds.). \bibinfo{pages}{15870--15882}.
\newblock
\urldef\tempurl%
\url{https://proceedings.neurips.cc/paper/2021/hash/85267d349a5e647ff0a9edcb5ffd1e02-Abstract.html}
\showURL{%
\tempurl}


\bibitem[Zhu et~al\mbox{.}(2020)]%
        {GRACE}
\bibfield{author}{\bibinfo{person}{Yanqiao Zhu}, \bibinfo{person}{Yichen Xu}, \bibinfo{person}{Feng Yu}, \bibinfo{person}{Qiang Liu}, \bibinfo{person}{Shu Wu}, {and} \bibinfo{person}{Liang Wang}.} \bibinfo{year}{2020}\natexlab{}.
\newblock \showarticletitle{Deep graph contrastive representation learning}.
\newblock \bibinfo{journal}{\emph{arXiv preprint arXiv:2006.04131}} (\bibinfo{year}{2020}).
\newblock


\bibitem[Zhu et~al\mbox{.}(2021)]%
        {GCA}
\bibfield{author}{\bibinfo{person}{Yanqiao Zhu}, \bibinfo{person}{Yichen Xu}, \bibinfo{person}{Feng Yu}, \bibinfo{person}{Qiang Liu}, \bibinfo{person}{Shu Wu}, {and} \bibinfo{person}{Liang Wang}.} \bibinfo{year}{2021}\natexlab{}.
\newblock \showarticletitle{Graph contrastive learning with adaptive augmentation}. In \bibinfo{booktitle}{\emph{Proceedings of the Web Conference 2021}}. \bibinfo{pages}{2069--2080}.
\newblock


\end{thebibliography}

\appendix

\section{Proofs}

\begin{proof}[Proof of corollary~\ref{coro:z1_h1}]
    \label{proof:coro_z1_h1}
    Assuming the loss function Eq.~\eqref{eq:bgrl_loss} reaches the global optimum, we have
    \begin{equation}
        \frac{\widetilde{\mathbf{Z}}_{(1, i)} \widetilde{\mathbf{H}}_{(2, i)}^{\top}}{||\widetilde{\mathbf{Z}}_{(1, i)}||_2 ||\widetilde{\mathbf{H}}_{(2, i)}||_2}=1,
    \end{equation}
    which indicates $\widetilde{\mathbf{Z}}_{(1, i)}$ and $\widetilde{\mathbf{H}}_{(2, i)}$ share the same geometric direction with distinct lengths.
    For convenience, let the length of $\widetilde{\mathbf{Z}}_{(1, i)}$ is $n_{i}$ times of $\widetilde{\mathbf{H}}_{(2, i)}$'s, that is,
    $\widetilde{\mathbf{Z}}_{(1, i)}=n_{i}\space\widetilde{\mathbf{H}}_{(2, i)}$ where $n_{i} > 0$. Taking Eq.~\eqref{eq:same_repre} into consideration, we arrive at
    \begin{equation}\label{eq:z1_h1}
        \widetilde{\mathbf{Z}}_{(1, i)}=\mathbf{W}_p \widetilde{\mathbf{H}}_{(1, i)}=\lambda_{i}\widetilde{\mathbf{H}}_{(1, i)}, where \text{\space} \lambda_{i}=\frac{n_{i}}{m_{i}}>0.
    \end{equation}
    Thus, we arrive at the end of the proof.
\end{proof}

\begin{proof}[Proof of theorem~\ref{theo:infer_predictor}]
    \label{proof:theo_infer_predictor}
    With the above assumptions, we can regard BGRL as a Teacher-Student model, where predictor $\mathbf{W}_p$ represents the student, the teacher network $\mathbf{W}$ is an identity mapping $\mathbf{I}$, and $\mathbf{H}$ represents the input of $\mathbf{W}_p$ and $\mathbf{W}$.
    Accordingly, the graph encoders are considered as a module for preprocessing the original graph.
    From Eq.~\eqref{eq:z1_h1}, we show that $\lambda_{i}$ is affected by both $n_{i}$ and $m_{i}$ while only $n_{i}$ is related to the predictor. Since we are mainly concerned with the predictor, we set $m_{i}=1$ as our assumption described, i.e., $\widetilde{\mathbf{H}}_{1}=\widetilde{\mathbf{H}}_{2}=\mathbf{H}$. For $m_{i}$ with distinct values, we leave it for future work. Therefore, we can derive the following formula,
    \begin{equation}
        \mathbf{Y}_{T} = \mathbf{W} \mathbf{H}=\widetilde{\mathbf{H}}_{2}, \text{   }
        \mathbf{Y}_{S} = \mathbf{W}_{p} \mathbf{H} = \widetilde{\mathbf{Z}}_{1},
    \end{equation}
    where $\mathbf{Y}_{T}$ and $\mathbf{Y}_{S}$ are the outputs of the teacher and student network respectively. We then rewrite the loss function Eq.~\eqref{eq:bgrl_loss} of BGRL as the equivalent one,
    \begin{equation}\label{eq:dis_bgrl_loss}
        \ell^{'}(\theta, \phi) = 2 - \frac{2}{N} \sum_{i=1}^{N}||\widetilde{\mathbf{Y}}_{S(1, i)} - \widetilde{\mathbf{Y}}_{T(2, i)}||_{2}^{2},
    \end{equation}
    where $\widetilde{\mathbf{Y}}_{S(1,i)}$ and $\widetilde{\mathbf{Y}}_{T(2,i)}$ are $\ell_{2}$-normalized vectors. Then, denote the input-output covariance matrix of $\mathbf{W}$ and associated singular value decomposition as follows,
    \begin{equation}
        \sum=\frac{1}{N-1} \mathbf{H}^{\top} \widetilde{\mathbf{Y}}_{T}=\frac{1}{N-1} \mathbf{H}^{\top} \mathbf{H}=\hat{\mathbf{U}}\hat{\mathbf{S}}\hat{\mathbf{V}}.
    \end{equation}
    
    Similarly, the singular value decomposition of $\mathbf{W}_p$ is,
    \begin{equation}
        \mathbf{W}_{p} = \mathbf{U} \mathbf{S} \mathbf{V}.
    \end{equation}
    
    When student $\mathbf{W}_{p}$ is initialized as $\mathbf{W}_p=\epsilon\hat{\mathbf{U}}\hat{\mathbf{V}}^{\top}$ and optimized by Eq.~\eqref{eq:dis_bgrl_loss}, where all student singular values are $\epsilon$, we could apply the training dynamic of Teacher-Student network introduced from ~\cite{TS_Dyna}.
    That is to say, as the training processes, the singular vectors of $\mathbf{W}_p$ remain unchanged while the singular values evolve as
    \begin{equation}
        s(t,\hat{s})=\frac{\hat{s}e^{2\hat{s}t/{\omega}}}{e^{2\hat{s}t/{\omega}} - 1 + \hat{s}/{\omega}},
    \end{equation}
      where $\hat{s}$ and $s$ are singular values from $\hat{S}$ and $S$ respectively, $\omega$ is the reciprocal value of learning rate.

    Hence, when $t \rightarrow \infty$, $s\rightarrow \hat{s}$, and $\mathbf{W}_p\rightarrow \sum$, which is
    \begin{equation}
        \mathbf{W}_p = \frac{1}{N-1}\mathbf{H}^{\top}\mathbf{H}.
    \end{equation}
    Thus, we arrive at the end of the proof.
\end{proof}

\section{More Details on Experiments}\label{appdix:exp_details}
\subsection{Datasets}

For comprehensive comparisons, we validate the quality of node representations on eight public graph benchmarks, including five medium datasets WikiCS~\cite{mernyei2020wiki}, Amazon-Computers, Amazon-Photos, Coauthor-CS, Coauthor-Physics~\cite{shchur2018pitfalls}, and three large-scale datasets ogbn-Arxiv, ogbn-MAG, ogbn-Products~\cite{hu2020ogb}. Dataset statistics are summarized in Table~\ref{tab:dataset}.

\begin{table}[h]
    \caption{Dataset Statistics}
    \label{tab:dataset}
    \scalebox{0.8}{
        \begin{tabular}{lcccc}
            \toprule
            \textbf{Dataset}          & \# Nodes & \# Edges  & \# Features & \# Classes \\
            \midrule
            \textbf{WikiCS}           & 11,701   & 216,123   & 300         & 10         \\
            \textbf{Amazon-Computers} & 13,752   & 245,861   & 767         & 10         \\
            \textbf{Amazon-Photos}    & 7,650    & 119,081   & 745         & 8          \\
            \textbf{Coauthor-CS}      & 18,333   & 81,894    & 6,805       & 15         \\
            \textbf{Coauthor-Physics} & 34,493   & 247,962   & 8,415       & 5          \\
            \textbf{ogbn-Arxiv}       & 169,343  & 1,166,243 & 128         & 40         \\
            \textbf{ogbn-MAG}       & 1,939,743  & 21,111,007 & 128         & 349         \\
            \textbf{ogbn-Products}       & 2,449,029  & 61,859,140 & 100         & 47         \\
            \bottomrule
        \end{tabular}}
\end{table}

\subsection{Implementation}
Following BGRL, the default graph encoder is specified as a two-layer standard GCN~\cite{GCN} followed by a batch normalization~\cite{ioffe2015batch} while a three-layer GCN followed by a layer normalization~\cite{ba2016layer} on ogbn-Arxiv. 
All model parameters are initialized with Glorot initialization~\cite{Glorot}.
To speed up evaluation, we adopt a simple linear layer on CUDA rather than a logistic regression with grid search on CPU used in BGRL. We optimize the graph encoder and linear classifier with AdamW~\cite{loshchilov2017decoupled} and Adam~\cite{Adam} respectively.
For ogbn-Products dataset, applying full-graph training is unrealistic and we perform subgraph-sampling training~\cite{GraphSAGE}. Specifically, we randomly sample 8192 nodes and their neighbors within 2 hops at each training iteration, where 15 neighbors are selected at each hop. 
Since full-graph training is infeasible on the ogbn-Products dataset, we are not able to use the model optimized from the previous iteration to produce the representations of all nodes at once. Therefore, we adopt a node representation cache unit, which stores the representations of all nodes. For each iteration, after sampling a subgraph, we only update the values of the nodes of the subgraph in the cache unit, thus achieving a balance between efficiency and performance.
All experiments are implemented with PyTorch and conducted on a single NVIDIA RTX 3090Ti GPU with 24GB memory. 

\subsection{Baselines}
The comparative methods mainly belong to the following categories: (1) classical semi-supervised GNN algorithms including GCN~\cite{GCN} and GAT~\cite{GAT}, (2) five widely compared GCL methods requiring negative pairs, including DGI~\cite{DGI}, MVGRL~\cite{MVGRL}, GRACE~\cite{GRACE}, GCA~\cite{GCA} and COSTA~\cite{COSTA} and (3) three negative-sample-free GCL methods including BGRL~\cite{BGRL}, AFGRL~\cite{AFGRL}, CCA-SSG~\cite{CCA-SSG}. 
For a more challenging comparison, we also involve a recent generative GSSL advance GraphMAE~\cite{GraphMAE} as a competitor.
For all baselines, we report their official results if available, otherwise, we report the results obtained from their official codes when consistent with our evaluation protocol.

\subsection{Evaluation}
For a fair comparison, we closely follow the linear-evaluation protocol as BGRL. Specifically, we first train the graph encoder in an unsupervised manner. Then, the produced node representations are trained with a $\ell_2$-regularized linear classifier without flowing any gradients back to the graph encoder. In addition to the public divisions for WikiCS and ogb benchmarks, we adopt 10\%:10\%:80\% training/validation/testing random divisions for the remaining datasets. We report the averaged performance over twenty random dataset divisions and model initializations for all datasets apart from ten model initializations for ogbn-Arxiv, ogbn-MAG and ogbn-Products.

\section{Algorithm}\label{appdix:algorithm}
To help better understand the proposed framework, we provide the detailed algorithm for training SGCL in Algorithm~\ref{algo:lightbgrl}.
\begin{algorithm}[h]
    \caption{SGCL training process}
    \label{algo:lightbgrl}
    \begin{algorithmic}[0]
        \Require Graph $\mathcal{G}=(\mathcal{V}, \mathcal{E})$, adjacency matrix $\mathbf{A}$, feature matrix $\mathbf{X}$, graph encoder $f_\theta(\cdot)$, augmentation function $\mathcal{T}$, maximum number of iterations $T$;
        \Ensure The learned encoder $f_\theta(\cdot)$;
    \end{algorithmic}
    \begin{algorithmic}[1] 
        \State Graph augmentation: $\mathbf{A}_{0}, \mathbf{X}_{0} \sim \mathcal{T}(\mathcal{G})$;
        \State Target representation generation: $\mathbf{H}_{0}^{'}=f_{\theta’_{0}}(\mathbf{A}_{0}, \mathbf{X}_{0})$;
        \For{\emph{iteration t $\leftarrow$ $1,\dots, T$}};
        \State Graph augmentation: $\mathbf{A}_{t}, \mathbf{X}_{t} \sim \mathcal{T}(\mathcal{G})$;
        \State Online representation generation: $\mathbf{H}_{t}=f_{\theta_{t}}(\mathbf{A}_{t}, \mathbf{X}_{t})$;
        \State Inferential predictor: calculate $\mathbf{Z}_{t}$ according to Eq \eqref{eq: prediction};
        \State Calculate cosine similarity loss $\mathcal{L}_{\theta_{t}}$ according to Eq \eqref{eq:loss};
        \State Update $\theta_{t}$ to $\theta_{t}^{'}$ by the optimizer;
        \State Target representation generation: $\mathbf{H}'_{t}=f_{\theta_{t}^{'}}(\mathbf{A}_{t}, \mathbf{X}_{t})$
        \EndFor \\
        \Return $f_\theta(\cdot)$;
    \end{algorithmic}
\end{algorithm}

\section{Additional Discussions}

\subsection{Discussions on EMA mechanism}\label{appdix:ema_explain}
It is commonly believed that the EMA update mechanism is indispensable to prevent model collapse.
However, in this paper, we demonstrate that the model still maintains good performance even in the absence of EMA, i.e., $\tau=0$. 
The role of EMA is to offer the possibility of superior performance, as claimed in the BYOL\cite{BYOL}. 
Namely, different values of $\tau$ enable the model parameters to be integrated from the previous training steps, thus enhancing the performance. 
Nevertheless, based on Figure~\ref{img:module_cmp_wikics}, we reveal that EMA makes a negligible contribution to BGRL.
Actually, the two crucial modules that prevent BGRL from collapsing are the predictor and stop-gradients, where the latter has been highlighted as an important training technique in SimSiam\cite{Siamese}. 
In this paper, we concentrate more on understanding the role of different components in BGRL architecture and the reason why BGRL can produce discriminative representations without negative samples, thus obtaining a more concise and effective framework.


\end{document}